\documentclass[final,12pt]{clear2023}
\usepackage[english]{babel} 
\usepackage[T1]{fontenc} 
\usepackage[utf8]{inputenc} 
\usepackage[babel]{microtype} 

\usepackage{url}
\usepackage{csquotes}
\usepackage{float}
\usepackage{siunitx}

\usepackage{natbib}
\bibliographystyle{abbrvnat}

\usepackage{algorithm}
\usepackage{algorithmicx}
\usepackage[noend]{algpseudocode}

\usepackage[belowskip=5pt,aboveskip=0pt]{caption}

\usepackage{tikz}
\usetikzlibrary{bayesnet,decorations.markings,automata,positioning}

\newcommand{\email}[1]{\href{mailto:#1}{\nolinkurl{#1}}}

\usepackage{listings}
\usepackage{color}

\frenchspacing


\newcommand{\EE}{\mathbb{E}}

\newcommand{\noise}[1]{\ensuremath{\sigma_{u_#1}^2}}
\newcommand{\ind}{\perp\!\!\!\!\perp} 


\usepackage{hyperref}



\title[Estimating long-term causal effects in the presence of unobserved confounding]{Estimating long-term causal effects from short-term experiments and long-term observational data with unobserved confounding}

\usepackage{times}
\usepackage{amsfonts}  
\usepackage{cleveref}
\usepackage{etoolbox}

\patchcmd{\theorem}{Theorem}{Proposition}{}{}



\clearauthor{%
 \Name{Graham Van Goffrier} \Email{ucapgwg@ucl.ac.uk}\\
 \addr Department of Physics and Astronomy, University College London\footnote{Research was started while this author as an intern at Spotify.}
 \AND
 \Name{Lucas Maystre} \Email{lucasm@spotify.com}\\
 \addr Spotify%
 \AND
 \Name{Ciarán Gilligan-Lee} \Email{ciaranl@spotify.com, ciaran.lee@ucl.ac.uk}\\
 \addr Spotify \& Department of Physics and Astronomy, University College London%
}

\begin{document}

\maketitle

\begin{abstract}%
Understanding and quantifying cause and effect is an important problem in many domains. The generally-agreed solution to this problem is to perform a randomised controlled trial. However, even when randomised controlled trials can be performed, they usually have relatively short duration's due to cost considerations. This makes learning long-term causal effects a very challenging task in practice, since the long-term outcome is only observed after a long delay. In this paper, we study the identification and estimation of long-term treatment effects when both experimental and observational data are available. Previous work provided an estimation strategy to determine long-term causal effects from such data regimes. However, this strategy only works if one assumes there are no unobserved confounders in the observational data. In this paper, we specifically address the challenging case where unmeasured confounders are present in the observational data. Our long-term causal effect estimator is obtained by combining regression residuals with short-term experimental outcomes in a specific manner to create an instrumental variable, which is then used to quantify the long-term causal effect through instrumental variable regression. We prove this estimator is unbiased, and analytically study its variance. In the context of the front-door causal structure, this provides a new causal estimator, which may be of independent interest. Finally, we empirically test our approach on synthetic-data, as well as real-data from the International Stroke Trial.

\end{abstract}

\begin{keywords}%
Long-term causal effects, latent confounding, linear Structural Causal Models%
\end{keywords}

\section{Introduction}
\label{sec:introduction}

Quantifying cause and effect relationships is of fundamental importance in many fields, from medicine to economics (\cite{richens2020improving, gilligan2020causing}). The gold standard solution to this problem is to conduct randomised controlled trials, or A/B tests. However, in many situations, such trials cannot be performed; they could be unethical, too expensive, or just  technologically infeasible. However, even when randomised controlled trials can be performed, they usually have relatively short durations due to cost considerations. For example, online A/B tests in industry usually last for only a few weeks \citep{gupta2019top}. This makes learning long-term causal effects a very challenging task in practice, since long-term outcomes are often observed only after a long delay. Often short-term outcomes are different to long-term ones \citep{kohavi2012trustworthy}, and, as many decision-makers are interested in long-term outcomes, this is a crucial problem to address. For instance, technology companies are interested in understanding the impact of deploying a feature on long-term retention \citep{chandar2022using}, economists are interested in long-term outcomes of job training programs \citep{athey2019surrogate}, and doctors are interested in the long-term impacts of medical interventions, such as treatments for stroke \citep{Carolei1997}.

In contrast to experimental data, observational data are often easier and cheaper to acquire, so they are more likely to include long-term outcome observations. Previous work by \citet{athey2019surrogate} devised a method to estimate long-term causal effects by combining observational long-term data and short-term experimental data. However, this strategy only works if one assumes there are no unobserved confounders in the observational data. Nevertheless, observational data are very susceptible to unmeasured confounding, which can lead to severely biased treatment effect estimates. Can we combine these short-term experiments with observational data to estimate long-term causal effects when latent confounders are present in observational data? 
    
In this paper, we address this problem and study the identification and estimation of long-term treatment effects when both short-term experimental data and observational data with latent confounders are available. We initially work with linear structural equation models. Our long-term causal effect estimator is obtained by combining regression residuals with short-term experimental data in a specific manner to create an instrumental variable, which is then used to quantify the long-term causal effect through instrumental variable regression. We prove that this estimator is unbiased, and analytically study its variance. When applied in the front-door causal structure, this strategy provides a new causal estimator, which may be of independent interest. We extend this estimator from linear structural causal models to the partially linear structural models routinely studied in economics \citep{chernozhukov2016double} and prove unbiasedness still holds under mild assumptions. Finally, we empirically test our long-term causal effect estimator, demonstrating accurate estimation of long-term effects on synthetic data, as well as real data from the International Stroke Trial.

Although long-term effect estimation is our primary focus, the estimator and methods described can be applied to any single-stage causal effect. In this context, they can be interpreted as a novel  strategy that combines Front-Door and Instrument Variables to estimate causal effects in the presence of unobserved confounders.

In summary, our main contributions are:
\begin{enumerate}
    \item An algorithm for estimating long-term causal effects unbiasedly from both short-term experiments and observational data with latent confounders in linear structural causal models. This approach allows for continuous treatment variables---hence can deal with treatment dosages.
    \item  An analytical study of the variance of this estimator.
    \item  An extension of our estimator from linear structural causal models to partially linear structural models and a proof that unbiasedness still holds under a weak assumption.
    \item An empirical demonstration of our long-term causal effect estimator on synthetic and real data.
\end{enumerate}

\noindent Relevant source code and documentation has been made freely available in our \href{https://github.com/vangoffrier/UnConfounding}{online repository}.

\section{Related work}
\label{sec:relatedwork}

\paragraph{\textbf{Estimating long-term causal effects}} The estimation of long-term causal effects from short-term experiments and observational data was initiated by \cite{athey2019surrogate}. The authors of that work devised a method to estimate such quantities by making use of short-term mediators, or surrogates, of the treatment. Their estimation strategy was comprised of two parts: first, they use the experimental data to determine the impact of the treatment on the surrogates, and then combined this impact with a predictive causal model that used the observational data to predict the impact of a change in the surrogates on the long-term outcome. This allowed them to predict the impact of the treatment on the long-term outcome directly at the end of the short-term experiment. However, this strategy only works if one assumes there are no unobserved confounders in the observational data. Recent work by \cite{cheng2021long} has expanded this approach with tools from machine learning, by learning efficient representations of the surrogates---again requiring there to be no unobserved confounders. More recent work by \cite{imbens2022long} has explored estimating long-term causal effects when unobserved confounders are present. These authors utilised results from the proximal causal inference literature, see \cite{tchetgen2020introduction} for an overview of these results, in their estimation strategy. However, to make use of these results, the authors have to assume existence of \emph{three} sequential mediators between the treatment and long-term outcome, and that these satisfy completeness conditions that, informally, require any variation in the latent confounders is captured by variation in the mediators. Our results, on the other hand, provide long-term treatment effect estimators that are unbiased even in the presence of latent confounders that do not require such sequential mediators that are strong proxies for the latent confounders.

\paragraph{\textbf{Combining experimental and observational data}} Beyond using observational data and short-term experimental data to estimate long-term causal effects, previous work has explored other advantages of combining observational and experimental data. Indeed, \cite{bareinboim2016causal} have investigated non-parametric identifiability of causal effects using both observational and experimental data, and how one can utilise such data regimes to transport causal effects learned in one data to another, in a paradigm they refer to as ``data fusion.'' Moreover, \cite{jeunen2022disentangling} has shown that one can learn to disentangle the effects of multiple, simultaneously-applied interventions by combining observational data with experimental data from joint interventions. Lastly, \cite{ilse2021efficient} demonstrated the most efficient way to combine observational and experimental data to learn certain causal effects. They showed they could significantly reduce the number of samples from the experimental data required to achieve a desired estimation accuracy.

\paragraph{\textbf{Linear structural causal models}} Many previous authors have worked in the linear structural causal model formalism. Indeed, \cite{shimizu2006linear} has shown that one can recover causal structure given just observational data if one assumes an underlying linear structural causal model with non-Gaussian noise. \cite{gupta2021estimating} has utilised this formalism to derive closed form expressions for the bias and variance of treatment effect estimators when both observed confounders and mediators are present. \cite{cinelli2019sensitivity} has derived closed-form expressions for the treatment effect bias when there are unobserved confounders in the dataset under investigation. Lastly, \cite{zhang2022causal} has explored what conditions lead to bias when estimating causal effects from non-IID data, and how can we remove such bias given certain assumptions. 

\section{Methods}
\label{sec:theory}

This section is structured as follows. We first define linear structural causal models with Gaussian noise, the class of models we will mainly be working with in this paper.
As a warm up to our main problem, we first explore long-term effect estimation when latent confounding influences the short-term treatment and long-term outcome, but does not influence the mediator. We note that this confounding may represent a single cause which persists through both short-term and long-term timescales.
The causal structure in this particular case corresponds to the front-door structure studied in \cite{pearl2009causality}. In this case, we derive---to our knowledge---a novel causal effect estimator for the front-door criterion, which may be of independent interest. This estimator is biased when latent confounding is present between the treatment and long-term outcome. However, the way the bias manifests is instructive, and suggests a way to adapt this estimation strategy to make it unbiased in this case. We prove that the estimator based on this strategy is indeed unbiased in the presence of latent confounding, and analytically study its variance. Finally, we extend this estimator from linear structural causal models to partial linear structural models, and prove that its bias is small in the presence of latent confounding if the treatment is strongly correlated with the latent confounder.

\subsection{Setting up the problem}

Motivated by the desire to unbiasedly combine short-term experimental data with long-term observational data, we define the following linear Gaussian structural causal model, which we will refer to as the linear confounded-mediator model (CMM):
\begin{equation}
    W_i = u_i^W, \hspace{15pt}
    X_i = dW_i + u_i^X, \hspace{15pt}
    M_i = cX_i + \epsilon W_i + u_i^M, \hspace{15pt}
    Y_i = aM_i + bW_i + u_i^Y,
    \label{struc_eqs}
\end{equation}

where index $i$ runs over samples. Here, $X$,$M$,$Y$,$W$ are respectively the treatment, short-term mediator, long-term outcome, and latent confounder. The causal structure for this model is depicted in Figure~\ref{fig:causalgraph_cm}. \footnote{In this work we assume the causal structure follows Figure~\ref{fig:causalgraph_cm}. To gain confidence in this assumption, one could employ causal discovery algorithms, see \cite{lee2017causal, dhir2020integrating, gilligan2022leveraging} for more information on these algorithms.} For the observed variables $X$,$M$,$Y$, the $u_i^N$ are independent Gaussian noise terms with zero mean: $u_i^N\sim \mathcal{N}(0,\sigma^2_{u^N})$ for node $N\in\{X,M,Y\}$. The term $u_i^W$ in the latent confounder structural equation is also an independent Gaussian noise term, but it has non-zero mean $\mu_{u^W}\neq 0$: $u_i^W\sim \mathcal{N}(\mu_{u^W},\sigma^2_{u^W})$.
\begin{figure}[t]
  \centering
  \begin{tikzpicture}

  \node[obs] (x)  {$X$};
  \node[obs, right=2.5cm of x] (m)  {$M$};
  \node[obs, right=2.5cm of m] (y)  {$Y$};
  \node[obs, above=2.0cm of m] (w)  {$W$};

  \path (w) edge [->, >={triangle 45}] node[above] {$d$} (x) ;
  \path (w) edge [->, >={triangle 45}] node[above] {$b$} (y) ;
  \path (x) edge [->, >={triangle 45}] node[above] {$c$} (m) ;
  \path (m) edge [->, >={triangle 45}] node[above] {$a$} (y) ;
  \path (w) edge [->, >={triangle 45}] node[right] {$\epsilon$} (m) ;


\end{tikzpicture}
  \caption{%
Causal graph with mediator confounded by latent $W$.}
  \label{fig:causalgraph_cm}
\end{figure}
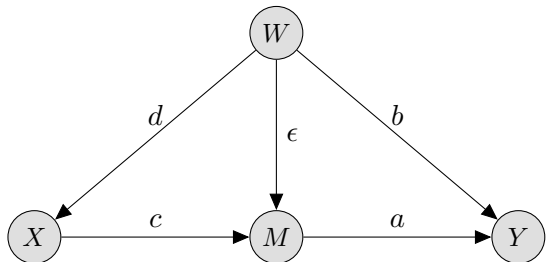

The framework having been defined, the typically desired treatment effect is $ac$.
But, as we assume that $c$ can be estimated unbiasedly from experimental data, our goal is to estimate $a$ given $c$ and an observational dataset of samples from $(X, M, Y)$.
That is, we ask to what extent it is possible to transfer knowledge of causation before a mediator to knowledge of causation after that mediator, in the presence of unobserved confounding on that mediator. For example, we could take $c$ to have been conclusively estimated via an A/B test, while $a$ is inaccessible to such experimentation due to its long timescale. This question also naturally arises in the context of chains of $N_M$ mediator variables, where the statistician hopes to propagate knowledge of an early mediation stage `down the chain'. Although we focus on scalar-valued variables throughout, an extension of this methodology to vector-valued $W$ and $M$ would be straightforward, only requiring an expansion of the covariance-matrix formalism outlined in Appendix \ref{sec:algebra} and interpreting $\epsilon$ as matrix-valued.

\subsection{Warm-up: a mediator without confounding}

With $\epsilon = 0$ the CMM in Figure~\ref{fig:causalgraph_cm} is the standard mediator---or front-door---model, treated thoroughly in the linear setting by \cite{gupta2021estimating}. It is well-known that so long as mediator $M$ is not directly confounded, $a$ may be unbiasedly estimated by the front-door criterion estimator (FDC):
\begin{equation}
    \hat{a}_{FDC} = P(Y|\text{do}(M)) = \sum_X P(Y|M, X)P(X) = \frac{(X.X) (M.Y) - (X.M) (X.Y)}{(X.X) (M.M) - {(X.M)}^2},
\end{equation}
where we have used $A.B$ as a shorthand for sample-space inner product $\sum_i A_i \cdot B_i$. Note that no knowledge of $c$ is needed. Indeed $c$ can be unbiasedly estimated by regressing $M$ on $X$ here.

We now give an alternative derivation of the FDC in terms of instrumental variables, a review of which is given in \cite{pearl2009causality}. Essentially, an instrument for a causal arrow $a: M \rightarrow Y$ is a variable $I$ such that a nonzero arrow $f: I \rightarrow M$ exists, and $I$ is uncorrelated with any other causes of $Y$, such as $W$ or $u_Y$ in the CMM. 

Consider the ordinary least squares (OLS) regression of $M$ on $X$, which trivially produces an unbiased estimator $\hat{c} = \frac{M.X}{X.X}$. Naively rearranging the structural equation, the residual of this estimator appears to be noise $u_M$. Constructing the true residual, we see that this still holds once all covariances are accounted for,
\begin{equation}
    R_c \equiv M - \textrm{OLS}[M|X] X,
\end{equation}
following from independence of $u_M$ from $u_X$ and $u_W$, the terminal causes of $X$. For the same reason, this residual $R_c = u_M$ is a valid instrument for $a: M \rightarrow Y$, as seen by constructing the relevant instrumental estimator:
\begin{equation}
    \hat{a}_{R_c} = \frac{\textrm{OLS}[Y|R_c]}{\textrm{OLS}[M|R_c]}.
\end{equation}
%
The above expression may be phrased entirely in terms of observed variables by making the substitution $u_M \mapsto M - \frac{M.X}{X.X} X$. Simplifying, we arrive at $\hat{a}_{R_c} = \hat{a}_{\text{FDC}}$. Hence, our instrumental-inspired estimator is unbiased and equal to the previously known estimator that follows from the front-door criterion. We will refer to $\textrm{Res}[M|X]$ corresponding to $c: X \rightarrow M$ more generally as the $c$-residual $R_c$. To our knowledge, this construction of the FDC via an instrumental estimator has not appeared in the literature, and we will refer to it as the Instrumental FDC (IFDC).

\subsection{The Instrumental FDC for confounded mediators}

A causal arrow $\epsilon: W \rightarrow M$ violates the conditions for the FDC. Our reason for introducing the IFDC is that it facilitates a natural extension of the FDC to the confounded mediator model, and more generally to any model with pathway $X \rightarrow M \rightarrow Y$ as a subgraph. The IFDC estimator can be presumed biased since $W$ and $M$ are no longer d-separated after conditioning on $X$. Expressions for the IFDC biases on $a$ (and corresponding OLS bias on $c$) are derived in Appendix \ref{sec:proofs} and are given by:
\begin{equation}
    \text{Bias}[\hat{c}_{\text{OLS}}] = \frac{d \epsilon \sigma_{u_W}^2}{d^2 \sigma_{u_W}^2 + \sigma_{u_X}^2} \hspace{15pt}
    \text{Bias}[\hat{a}_{R_c}] = \frac{b \epsilon \sigma_{u_W}^2 \sigma_{u_X}^2}{ \epsilon^2 \sigma_{u_W}^2 \sigma_{u_X}^2 + \sigma_{u_M}^2 (\sigma_{u_X}^2 + d^2 \sigma_{u_W}^2)}
    \label{bias_ars}
\end{equation}
The bias on $c$ vanishes if $d \gg \epsilon$ or $\sigma_{u_X}^2 \gg \sigma_{u_W}^2$, while the bias on $a$ vanishes if $\epsilon \gg b$, $\sigma_{u_M}^2 \gg \frac{b \epsilon}{d^2} \sigma_{u_X}^2$, or $\sigma_{u_M}^2 \gg b \epsilon \sigma_{u_W}^2$. From the structural equations, we might naively expect residual $\textrm{Res}[M|X] = u_M - \frac{\epsilon}{d} u_X$, and therefore explain the bias on $\hat{a}_{R_c}$ by the lack of independence between $u_X$ and $X$. However, computing the correlations of the residual with $X$ and $W$ in full reveals a surprise:
\begin{equation}
    \EE[\textrm{Cov}(R_c,X)] = \EE[M.X - \frac{M.X}{X.X} X.X] = 0
\end{equation}
\begin{equation}
    \EE[\textrm{Cov}(R_c,W)] = \frac{\epsilon}{d^2(N-1)} \left( \sigma_{u_X}^2 + \EE\left[ \frac{(X.u_X)^2}{X.X} \right] \right) > 0
\end{equation}
%
This is a lesson in not relying too heavily on the intuition of structural equations for confounding variables: the bias on $\hat{a}_{R_c}$ in fact arises entirely from correlation between $R_c$ and $W$. In the following we will see that the residual instrument can be modified to retain unbiasedness if $c$ is known.

\subsection{The \texorpdfstring{$\epsilon/d$}~-improved IFDC}

We propose that the most direct route to propagate improved knowledge of $c$ forward, in order to improve the IFDC estimator for $a$, is via intermediate knowledge of the quantity $\frac{\epsilon}{d}$. Ratios are desirable targets for estimation because they are insensitive to correlated biases on their numerator and denominator, and this particular ratio naively manifests in $R_c$ as controlling the size of the biasing $u_X$ term. We have identified several strategies for constructing estimators for $\frac{\epsilon}{d}$, with a ratio estimator based on $X = d W + u_X$ and the residual $M - c X \sim \epsilon W + u_M$ proving the most successful:
\begin{equation}
    \widehat{\left( \frac{\epsilon}{d} \right)} = \frac{\overline{M - c X}}{\bar{X}}
\end{equation}
where $\bar{A}$ denotes the sample mean $\sum_i(A_i) / N$. This estimator is unbiased in the limit of large samples, as $\mu_{u^W} \neq 0$ and $\mu_{u^X} = \mu_{u^M} = 0$. It is possible that superior estimators exist, but we find the ratio estimator to be adequate for our purposes.

The ``$\frac{\epsilon}{d}$-improved'' residual is then defined as the portion of $M$ which is leftover after removing all causal contributions from $X$, both via direct path $c$ and backdoor path $\epsilon/d$:
\begin{equation}
    R_R = R_c - \widehat{\left( \frac{\epsilon}{d} \right)} X = M - \left( c + \widehat{\left( \frac{\epsilon}{d} \right)} \right) X.
\end{equation} 
This construction leaves a door open to joint estimation of $c$ and $\frac{\epsilon}{d}$ from the prior stage in the model, in the sense that only the sum is needed and biases of opposite sign could destructively interfere, but we do not explore this further. The resultant instrumental estimator for $a$ takes the form:
\begin{equation}
 \hat{a}_{R_R} = \frac{R_R.Y}{R_R.M} = \frac{M.Y - \left( c + \widehat{\left( \frac{\epsilon}{d} \right)} \right) X.Y}{M.M - \left( c + \widehat{\left( \frac{\epsilon}{d} \right)} \right) X.M}.
\end{equation}
For convenience in application by the reader, we express our estimation strategy in algorithmic form:

\begin{algorithm}
\caption{$\frac{\epsilon}{d}$-improved Instrumental FDC Estimator}\label{alg:edIFDC}
\textbf{Input:} Short-term experimental dataset $\mathcal{E}=\{X, M\}$, observational dataset $\mathcal{O}=\{X, M, Y\}$ \\
\textbf{Output:} Estimator for causal effect of $M$ on $Y$.
\begin{algorithmic}[1]
\State From $\mathcal{E}$, estimate causal effect of $X$ on $M$: $c$.
\State Using samples from $\mathcal{O}$, regress $M$ on $X$ and compute residual: $R_c$
\State Using samples from $\mathcal{O}$, compute sample mean of $M-cX$ and $X$ and take their ratio: $\epsilon / d.$
\State Compute $R_c - \frac{\epsilon}{d}X$ and denote it by $R_R$.
\State Use $R_R$ in instrumental variable regression to estimate the causal effect of $M$ on $Y$. 
\end{algorithmic}
\end{algorithm}
In the next section we show this strategy unbiasedly estimates the causal effect of $M$ on $Y$.
\subsection{Unbiasedness and variance for the \texorpdfstring{$\epsilon/d$}~-improved IFDC}

Although we will argue via approximations and simulations that $\hat{a}_{R_R} = {R_R.Y}/{R_R.M}$ is unbiased (except at its pole), it is more straightforward to show that the ratio of estimators ${\EE(R_R.Y)}/{\EE(R_R.M)}$ is unbiased. This uncorrelated-ratio approximation is justified by the fact that it holds exactly for the IFDC, even in the presence of latent confounding, and is further discussed in Appendix \ref{sec:proofs}.

Evaluating algebraically by the methods outlined in Appendix \ref{sec:algebra} one obtains:
\begin{align}
    \mathbb{E} \left[ M.Y - \left( c + \frac{\epsilon}{d} \right) X.Y \right] &= a \left( \sigma_{u_M}^2 - \frac{c \epsilon \sigma_{u_X}^2}{d} \right), \label{bias_improved_a} \\
    \mathbb{E} \left[ M.M - \left( c + \frac{\epsilon}{d} \right) X.M \right] &= \sigma_{u_M}^2 - \frac{c \epsilon \sigma_{u_X}^2}{d}, \label{bias_improved}
\end{align}
and so we can observe that $\hat{a}_{R_R}$ is unbiased to the extent that the uncorrelated-ratio approximation holds. There is one exception: a unique value of $\frac{\epsilon}{d} = \frac{1}{c}$ exists (assuming homoscedasticity of the noise terms for simplicity) for which the numerator and denominator simultaneously approach $0$, and at which the bias is therefore unbounded. For finite sample sizes, one expects that this pole will be centered in a region of finite width where the estimator performs poorly, but that this region will contract to a delta function as $N \rightarrow \infty$. In summary, we have the following:
\begin{theorem}\label{theorem: linear}
In linear CMMs, the causal effect $a: M \rightarrow Y$ can be unbiasedly estimated by computing the following ratio of expectations:
$$
\frac{\mathbb{E} \left[ R_c.Y - \left(\frac{\epsilon}{d} \right) X.Y \right]}{\mathbb{E} \left[ R_c.M - \left( \frac{\epsilon}{d} \right) X.M \right]} = \frac{\mathbb{E} \left[ M.Y - \left( c + \frac{\epsilon}{d} \right) X.Y \right]}{\mathbb{E} \left[ M.M - \left( c + \frac{\epsilon}{d} \right) X.M \right]} = a.
$$
\end{theorem}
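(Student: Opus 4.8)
The plan is to split the stated three-way equality into its two genuinely distinct parts. The first equality is purely definitional: recalling that in the improved construction $R_c = M - cX$ is formed from the experimentally identified $c$, I would substitute this into $R_c.Y - (\epsilon/d)X.Y$ and $R_c.M - (\epsilon/d)X.M$ and use bilinearity of the sample inner product $A.B = \sum_i A_iB_i$ to regroup the coefficient of $X$, turning the left-hand ratio into the middle one with combined coefficient $c + \epsilon/d$. No probabilistic content is needed here. All the substance therefore lives in the second equality, i.e.\ in showing that the two expectations in equations~\eqref{bias_improved_a} and \eqref{bias_improved} stand in ratio $a$.

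The crux is a single sample-level identity that I would establish first. Substituting the structural equation $M_i = cX_i + \epsilon W_i + u_i^M$ into $M_i - (c+\epsilon/d)X_i$ leaves $\epsilon W_i + u_i^M - (\epsilon/d)X_i$, and then substituting $X_i = dW_i + u_i^X$ cancels the $\epsilon W_i$ term exactly, giving the confounder-free combination $M_i - (c+\epsilon/d)X_i = u_i^M - (\epsilon/d)u_i^X$. This is the whole point of the $\epsilon/d$ correction: it is precisely calibrated to annihilate the $W$-dependence that, as computed earlier, gave the nonzero $\EE[\mathrm{Cov}(R_c,W)]$ responsible for the bias of $\hat a_{R_c}$. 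It is worth emphasising here that the nonzero mean $\mu_{u^W}$ plays no role in this cancellation --- $W$ drops out entirely at the sample level --- even though that same nonzero mean is exactly what makes $\epsilon/d$ identifiable via the ratio estimator.

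With this identity in hand, the two expectations follow by routine orthogonality bookkeeping. I would expand $Y_i$ and $M_i$ in the independent noise terms, $Y_i = [a(cd+\epsilon)+b]W_i + ac\,u_i^X + a\,u_i^M + u_i^Y$ and $M_i = (cd+\epsilon)W_i + c\,u_i^X + u_i^M$, then take the inner product of each with $u_i^M - (\epsilon/d)u_i^X$. Because $u^M$ and $u^X$ are mutually independent, zero-mean, and independent of $W$ and $u^Y$, every cross term vanishes and only the variance terms $\EE[(u_i^M)^2]=\sigma_{u_M}^2$ and $\EE[(u_i^X)^2]=\sigma_{u_X}^2$ survive. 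This yields $\EE[M.M-(c+\epsilon/d)X.M] \propto \sigma_{u_M}^2 - c\epsilon\sigma_{u_X}^2/d$ and $\EE[M.Y-(c+\epsilon/d)X.Y] \propto a(\sigma_{u_M}^2 - c\epsilon\sigma_{u_X}^2/d)$, with the same (sample-size) proportionality constant. Dividing, the common factor cancels and the ratio equals $a$.

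I do not expect the algebra to be the hard part; rather, the care lies in two caveats. First, the cancellation leaves the ratio well defined only when the common factor is nonzero, so I would explicitly exclude the pole $\sigma_{u_M}^2 = c\epsilon\sigma_{u_X}^2/d$ (equivalently $\epsilon/d = 1/c$ under homoscedasticity), where numerator and denominator vanish simultaneously and the estimand is unbounded. Second, the theorem is a statement about the ratio of expectations, not the expectation of the ratio that the deployed estimator $\hat a_{R_R}$ actually computes; equating the two is the uncorrelated-ratio approximation, which I would invoke as justified separately (it holds exactly for the IFDC even under confounding). Apart from these points, the result is an immediate consequence of the sample-level cancellation.
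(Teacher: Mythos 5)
Your proposal is correct and follows essentially the same route as the paper: the paper's proof likewise reduces the theorem to the two expectation identities \eqref{bias_improved_a} and \eqref{bias_improved}, which are obtained in Appendix B by substituting the structural equations (so that $M-(c+\epsilon/d)X$ reduces to $u_M-(\epsilon/d)u_X$, free of $W$) and invoking independence and zero means of the noise terms, exactly your orthogonality bookkeeping. Your two caveats---the pole where the common factor $\sigma_{u_M}^2 - c\epsilon\sigma_{u_X}^2/d$ vanishes, and the distinction between the ratio of expectations and the expectation of the ratio (the uncorrelated-ratio approximation)---are the same qualifications the paper itself makes.
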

\begin{proof}
The result follows from application of \eqref{bias_improved_a} and \eqref{bias_improved}. Further details appear in Appendix \ref{sec:proofs}
\end{proof}
Although the presence of this isolated pole in the bias is not an overwhelming obstacle, it is practically inconvenient if samples are limited and one's system happens to fall in the wrong region of parameter space. Fortunately, there is one more tool at hand. In the case of a longer chain of mediators, more precisely if there exists a prior instrument on arrow $a: X \rightarrow M$ (which we will denote $g: V \rightarrow X$), it is no longer necessary for $c$ to be provided by an existing experiment. Instead, it may be estimated instrumentally by $\hat{c} = \frac{M.V}{X.V}$, while the $\frac{\epsilon}{d}$-improved IFDC can be built from an adjusted prior-instrument residual:
\begin{equation}
    R_V = Res(M|X) - \widehat{\left( \frac{\epsilon}{d} \right)} Res(X|V).
\end{equation}
The instrumental estimator $\hat{a}_{R_R}$ remains unbiased other than at a pole; but this pole is located at $\frac{\epsilon}{d} = \frac{1}{c \left( g^2 + 1 \right)}$, again assuming homoscedasticity of the noise terms. The practical consequence is that, if the practitioner has access to both a prior instrument and experimental data (or a low-variance estimation of $c$ from a previous link in the chain), they may choose whichever form of the IFDC is more suited to their value of $\epsilon/d$, which will be known. Given sufficiently strong prior causation $g$, the two poles are well-separated. However, even if only one of these is available, with sufficient samples the bias even arbitrarily near to a pole will approach 0.

Making use of the known variance properties of instrumental estimators, we construct an approximate expression for the asymptotic variance of $\hat{a}_{R_R}$ (details in Appendix \ref{sec:proofs}),
\begin{align}
    V_\infty (\hat{a}_{R_R}) &= \frac{b^2 \noise{W} \noise{X} + \noise{Y}(d^2 \noise{W} + \noise{X})} {(d^2 \noise{w} + \noise{X})} \cdot \frac{\noise{M} + \frac{\epsilon^2}{d^2} \noise{X}}{(\noise{M} - \frac{c \epsilon}{d} \noise{X})^2} \nonumber \\
    &= V_\infty (\hat{a}_{FDC}) \cdot \frac{ 1 + \frac{\epsilon^2}{d^2} \frac{\noise{X}}{\noise{M}} }{ \left( 1 - \frac{c \epsilon}{d} \frac{\noise{X}}{\noise{M}} \right) ^2}
\end{align}
which demonstrates that in general the improved estimator variance need not dramatically exceed that for the typical FDC, except near the bias pole $\frac{\epsilon}{d} = \frac{1}{c}$. Similarly, as treatment noise $\noise{X} \rightarrow 0$, $V_\infty (\hat{a}_{R_R}) \rightarrow V_\infty (\hat{a}_{FDC})$; this is equivalent to the situation where $d \gg \epsilon$ such that the treatment $X$ is very strongly coupled to the confounder $W$. As mediator noise $\noise{M} \rightarrow 0$, the variance vanishes, for the intuitive reason that weighted confounder $\epsilon W$ is then exactly known on a per-sample basis.

\subsection{Performance of improved estimators in a partial linear CMM}

We now assess to what extent the developed estimators remain unbiased when the causal effects $d: W \rightarrow X$ and $\epsilon: W \rightarrow M$ are permitted to be nonlinear. That is, we consider update to the confounded mediator model: $X = d(W) +u^X, M= c.X + \epsilon(W) + u^M.$ This is an example of a partial linear causal model, which we term the partial linear CMM. We will take functions $d(W)$ and $\epsilon(W)$ to be polynomial-valued, requiring further that $d(W)$ is invertible such that backdoor path $\epsilon \circ d^{-1}: X \rightarrow M$ is well-defined. Let us write:
\begin{equation}
    d(W) = \sum_{k=1}^\infty d_k \frac{W^k}{k!}, \hspace{15pt} \epsilon(W) = \sum_{k=1}^\infty \epsilon_k \frac{W^k}{k!}.
    \label{nonlinear_de}
\end{equation}
It is possible to define algebraic conditions on coefficients $d_k$ in the form of inequalities between the eigenvalues of the Hermite matrix of $d'(W)$, such that $d'(W) > 0 \hspace{2pt} \forall W$ (permitting $d'(W)=0$ at isolated points) such that $d(W)$ is invertible if and only if the algebraic conditions are satisfied.

It is well-known \citep{abramowitz1988} that the power series of an inverse function up to order $n$ may be computed iteratively from the coefficients of the original power series up to order $n$. We note however that each finite order in the original function induces nonzero terms to infinite polynomial order in the inverse function, which could be included say to order $m$ to improve precision. Taking $m = n$, we quote the series expansion for $\epsilon \circ d^{-1}$,
\begin{equation}
    \epsilon \circ d^{-1}(X) = \frac{\epsilon_1}{d_1} X + \frac{d_1 \epsilon_2 - d_2 \epsilon_1}{d_1^3} X^2 + \frac{d_1^2 \epsilon_3 + 2 d_2^2 \epsilon_1 - d_1 d_3 \epsilon_1 - 2 d_1 d_2 \epsilon_2}{d_1^5} X^3 + O(X^4),
    \label{eq:edivdpower}
\end{equation}
which also enjoys the key property that only coefficients order-by-order in $d$ and $\epsilon$ are needed.

We now investigate if the instrumental estimator $\hat{a}_{R_R}$, introduced in the linear case in the previous section and defined in the nonlinear case below, is biased:
\begin{equation}
    R_R = M - cX - \epsilon \circ d^{-1}(X) .
\end{equation}
Again, the justification for this estimation approach is that $R_R$ should be uncorrelated with confounder $W$, and therefore a good instrument for $a: M \rightarrow Y$, so long as it is possible to produce unbiased or low-bias estimates of $c$ and of the coefficients of $\epsilon \circ d^{-1}$.

In the non-linear case, how does one compute $\epsilon \circ d^{-1}(X)$? The residual from regressing $M$ on $X$ is naively given by $R = u_M + \epsilon \circ d^{-1}(X-u_X)$ by means of the backdoor path through $W$. Expanding the series representation from \eqref{eq:edivdpower}, we see that samplewise $R \rightarrow u_M + \epsilon \circ d^{-1}(X)$ as $u_X \rightarrow 0$, which corresponds to $\sigma_{u_x}^2 \rightarrow 0$. That is, to the case where $X$ is strongly correlated with $W$. Therefore, in this case, by polynomial regression of $R$ on $X$, it is theoretically possible to extract all coefficients of $\epsilon \circ d^{-1}$ to a desired order. We note that this method is much less sample-efficient than the ratio-based estimator for $\epsilon/d$ which we identified in the linear case. Now, in the case where $\sigma_{u_x}^2 \rightarrow 0$, where $X$ is strongly correlated with $W$, can we prove our estimation approach is unbiased? 

With $R_R$ well-defined, taking advantage of the structural equations, the bias on the instrumental estimator $\hat{a}_{R_R}$ may then be computed as:
\begin{align}
    \text{Bias}\left[ \hat{a}_{R_R} \right] &= \EE \left[ \frac{\left( u_M + \epsilon \circ d^{-1}(X - u_X) - \epsilon \circ d^{-1}(X) \right).Y}{\left( u_M + \epsilon \circ d^{-1}(X - u_X) - \epsilon \circ d^{-1}(X) \right).M} - a \right] \nonumber \\
    &= a\left(\frac{\noise{M} + \noise{X}P_1(\noise{X},\noise{W})}{\noise{M} + \noise{X}P_2(\noise{X},\noise{W})}\right) - a, \label{nonlinear_biasproof} 
\end{align}
where as in previous subsections, we have made use of the uncorrelated-ratio approximation to obtain an asymptotic bias estimate. $P_{1,2}$ are generic polynomials, and are computed algebraically by Isserlis' Theorem for higher-order moments. It is clear that this bias approaches $0$ as $X$ becomes increasingly correlated with $W$, yielding:
\begin{theorem}
In the partial linear CMM~\eqref{nonlinear_de}, $\text{Bias}\left[ \hat{a}_{R_R} \right] \to 0$ as $\noise{X} / \noise{M} \to 0$.
\end{theorem}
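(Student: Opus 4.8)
The plan is to start from the explicit form of the residual instrument after substituting the structural equations, and to show that the entire bias is ``switched on'' only by nonzero treatment noise. Writing $\phi := \epsilon \circ d^{-1}$ and using $X - u^X = d(W)$, the defining relation $M = cX + \epsilon(W) + u^M$ gives
\[
R_R = u^M + \epsilon(W) - \phi\bigl(d(W) + u^X\bigr),
\]
since $\phi(X - u^X) = \epsilon(d^{-1}(d(W))) = \epsilon(W)$. The structural observation I would isolate first is that at $\noise{X} = 0$ we have $u^X \equiv 0$ almost surely, so $R_R = u^M$ exactly; as $u^M$ is independent of both $W$ and $u^Y$ (the remaining causes of $Y$), it is then an exact instrument and the bias vanishes identically. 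Everything that follows is an estimate of how the bias grows away from this point.

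Second, I would invoke the uncorrelated-ratio approximation (justified in Appendix~\ref{sec:proofs} and already used in \eqref{nonlinear_biasproof}) to replace the expectation of the ratio by the ratio of expectations, and then decompose $Y = aM + bW + u^Y$:
\[
\text{Bias}[\hat a_{R_R}] = \frac{\EE[R_R.Y]}{\EE[R_R.M]} - a = \frac{b\,\EE[R_R.W] + \EE[R_R.u^Y]}{\EE[R_R.M]}.
\]
The term $\EE[R_R.u^Y]$ vanishes because $R_R$ is a function of $u^M, W, u^X$ only, all independent of the zero-mean $u^Y$. It then remains to control the numerator $\EE[R_R.W]$ and the denominator $\EE[R_R.M]$.

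Third, I would expand $\phi(d(W) + u^X) - \epsilon(W)$ in powers of $u^X$ and integrate termwise using Isserlis' theorem. Because $u^X$ is independent of $W$ and has zero mean, every odd power of $u^X$ integrates to zero, so the expansion of $\EE[R_R.W] = -\EE[(\phi(d(W)+u^X)-\epsilon(W)).W]$ starts at order $(u^X)^2$, giving $\EE[R_R.W] = -\tfrac12 \noise{X}\,\EE[\phi''(d(W))\,W] + O(\noise{X}^2) = \noise{X}\,q(\noise{X}, \noise{W})$ for a polynomial $q$; in particular the numerator carries an overall factor of $\noise{X}$, which is the source of the $\noise{X}P_1$ structure in \eqref{nonlinear_biasproof}. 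The same moment computation applied to $\EE[R_R.M] = \EE[(u^M + \epsilon(W) - \phi(d(W)+u^X))(cX + \epsilon(W) + u^M)]$ shows that the only $\noise{X}$-independent contribution is $\EE[(u^M)^2] = \noise{M}$: all cross terms involving $u^M$ vanish, and the remaining terms again vanish at $u^X \equiv 0$ and hence start at order $\noise{X}$, so $\EE[R_R.M] = \noise{M} + \noise{X}\,p(\noise{X},\noise{W})$ with $p$ independent of $\noise{M}$.

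Combining these gives
\[
\text{Bias}[\hat a_{R_R}] = \frac{b\,\noise{X}\,q}{\noise{M} + \noise{X}\,p} = \frac{b\,q\,(\noise{X}/\noise{M})}{1 + p\,(\noise{X}/\noise{M})},
\]
which tends to $0$ as $\noise{X}/\noise{M} \to 0$. I expect the main obstacle to be the Isserlis bookkeeping that establishes the two structural facts driving the result---that the numerator has no $\noise{X}$-independent piece while the denominator's $\noise{X}$-independent piece is exactly $\noise{M} \neq 0$---since for a general polynomial $\phi$ this requires mixed Gaussian moments of $W$ and $u^X$ to all orders. A secondary point worth flagging is that $p$ and $q$ are themselves polynomials in $\noise{X}$, so the clean conclusion presumes the intended regime in which $\noise{X}$ stays bounded (in particular $\noise{X} \to 0$, where $X$ becomes strongly correlated with $W$); there the ratio is genuinely governed by $\noise{X}/\noise{M}$ and the denominator stays bounded away from its pole.
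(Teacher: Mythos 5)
Your proof is correct and takes essentially the same route as the paper's: both substitute the structural equations into $R_R$, invoke the uncorrelated-ratio approximation, and use Isserlis-type Gaussian moment expansions to show the bias has a numerator carrying an overall factor of $\noise{X}$ over a denominator of the form $\noise{M} + \noise{X}\cdot(\text{polynomial})$, which is exactly the structure of \eqref{nonlinear_biasproof}. Your decomposition via $Y = aM + bW + u^Y$ (isolating $\EE[R_R.W]$) and your caveat that the polynomials themselves depend on $\noise{X}$ simply make explicit details that the paper's one-line proof leaves implicit.
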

\begin{proof}
The result follows from \eqref{nonlinear_biasproof}.
\end{proof}
The assumption that $X$ is highly correlated with the latent confounder $W$ is not too strong. Indeed, the fact that $X$ and $W$ are causes of $M$ means that the confounding bias $W$ introduces between $M$ and $Y$ cannot naively be removed using back-door adjustment.

\section{Experiments}
\label{sec:experiments}

To empirically test our estimator in linear and partially-linear CMMs, we perform several experiments and measure prediction bias both as a function of confounding $\epsilon$ and of the noise variances $\sigma^2 \equiv \{ \noise{X}, \noise{M}, \noise{Y}, \noise{W}\}$. We first test on two synthetic datasets, one with linear data generation functions, and another with nonlinear data generation. To test in a more realistic setting, we create a semi-synthetic experiment using real data from the International Stroke Trial \cite{Carolei1997}. Initially, all couplings are assumed linear and are set to $1$ unless otherwise specified, and noises assumed zero-mean homoscedastic Gaussian, with the exception of $\mu_W = 1$. We will then relax the assumption of linearity on $d$ and $\epsilon$, and finally relax the assumption of Gaussianity on both $W$ and $X$ by generating semi-synthetic data from the International Stroke Trial dataset \cite{Carolei1997}. In all cases, we use the IFDC as baseline.

Relevant source code and documentation has been made freely available in our \href{https://github.com/vangoffrier/UnConfounding}{online repository}.

\subsection{Linear synthetic experiments}

First, we simulate the CMM and compare the performances of the IFDC and the $\epsilon/d$-improved IFDC in estimating $a$. A $30 \times 3$ grid over $\epsilon$ and $\sigma^2$ is specified, and at each point in parameter space, $10^6$ model samples are generated. A sample draw consists of first performing a random Gaussian draw from $\mathcal{N}(\mu,\sigma^2)$ for each noise component $u^N$, where $\mu = 1$ for $N=W$ and otherwise $\mu=0$, and second propagating this data through the structural equations \eqref{struc_eqs} with $a=b=c=d=1$. These samples are divided into $100$ runs, from which the mean and variance of $\hat{a}$ may be computed for each estimator. The results are shown in Figure~\ref{fig:scanenoise_linear}, with the IFDC shown in the left column and the $\epsilon/d$-improved IFDC in the right column. 
\begin{figure}[h!]
	\centering
	\includegraphics[scale=0.3]{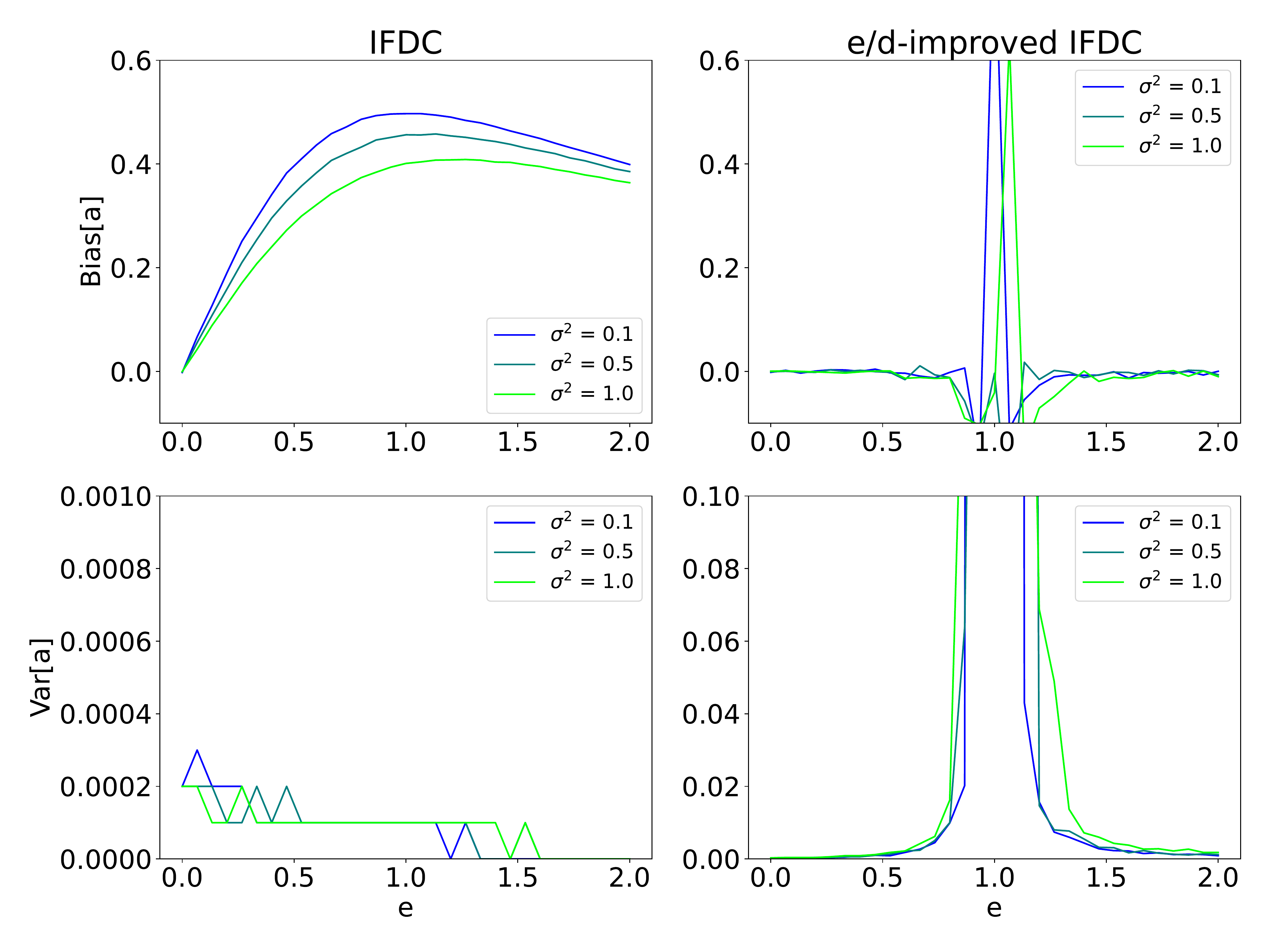}
	\caption{Experimental IFDC (left) and $\epsilon/d$-improved IFDC (right) biases (top) and variances (bottom) for a linear Gaussian model, plotted over $0 < \epsilon < 2.0$ and for $\sigma^2 \in \{ 0.1, 0.5, 1.0\}$. The plots show our estimator, the $\epsilon/d$-improved IFDC, is unbiased away from the pole at $\epsilon/d=1/c$, but the IFDC has high bias.}
	\label{fig:scanenoise_linear}
\end{figure}
The bias and variances properties for both estimators conform to our theoretical expectations. The nonzero bias from \eqref{bias_ars} is seen in the top left, with bias as $\epsilon$ for small $\epsilon$ and as $1/\epsilon$ for large $\epsilon$, while vanishingly small variance at this sample quantity is seen in the bottom left. For the improved estimator, the top right plot confirms unbiasedness throughout the $\epsilon$-domain except at pole value $\epsilon = 1$, as predicted by \eqref{bias_improved}, and reflected in the diverging variance precisely at this value on the bottom right. As mentioned in Section \ref{sec:theory}, the width of this bias pole can be improved with further samples, or alternatively can be translated by the introduction of a prior instrumental variable to $a: X \rightarrow M$

\subsection{Nonlinear synthetic experiments}

We now assess our perturbative approach to cubic-order nonlinearities in the coupling functions $d$ and $\epsilon$. A $6 \times 5$ grid over the quadratic and cubic polynomial coefficients is specified and at each point in parameter space, $10^5$ model samples are generated and divided into $100$ runs. We set $\sigma^2 = 0.3$ to ensure convergence, and $\epsilon = 2$ to avoid the bias pole for the improved estimator. The results are shown in Figure~\ref{fig:scand23_nonlinear} for cubic-polynomial $d$ and linear $\epsilon$, and in Figure~\ref{fig:scane23_nonlinear} for linear $d$ and cubic-polynomial $\epsilon$, with the IFDC shown in the left column and the $\epsilon/d$-improved IFDC in the right column.
\begin{figure}[h!]
	\centering
	\includegraphics[scale=0.2]{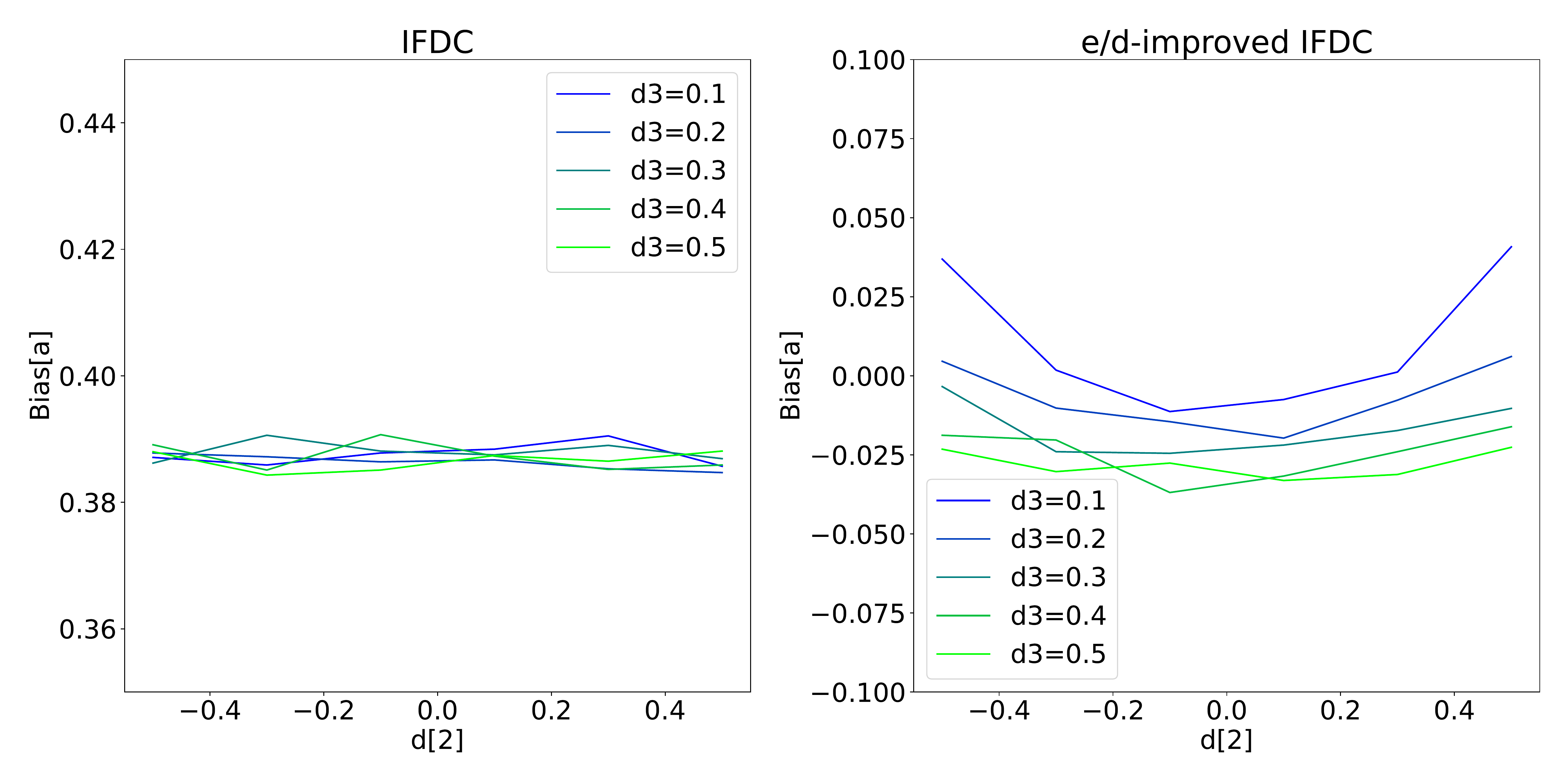}
	\caption{Experimental IFDC (left) and $\epsilon/d$-improved IFDC (right) biases for cubic-polynomial $d$ and linear $\epsilon$, plotted over $-0.5 < d_2 < 0.5$ and for $0 < d_3 < 0.5$. In the non-linear case, our estimator, the $\epsilon/d$-improved IFDC, has very low bias, but the IFDC has high bias (note that differing vertical scales have been employed to emphasize the trend).}
	\label{fig:scand23_nonlinear}
\end{figure}
\begin{figure}[h!]
	\centering
	\includegraphics[scale=0.2]{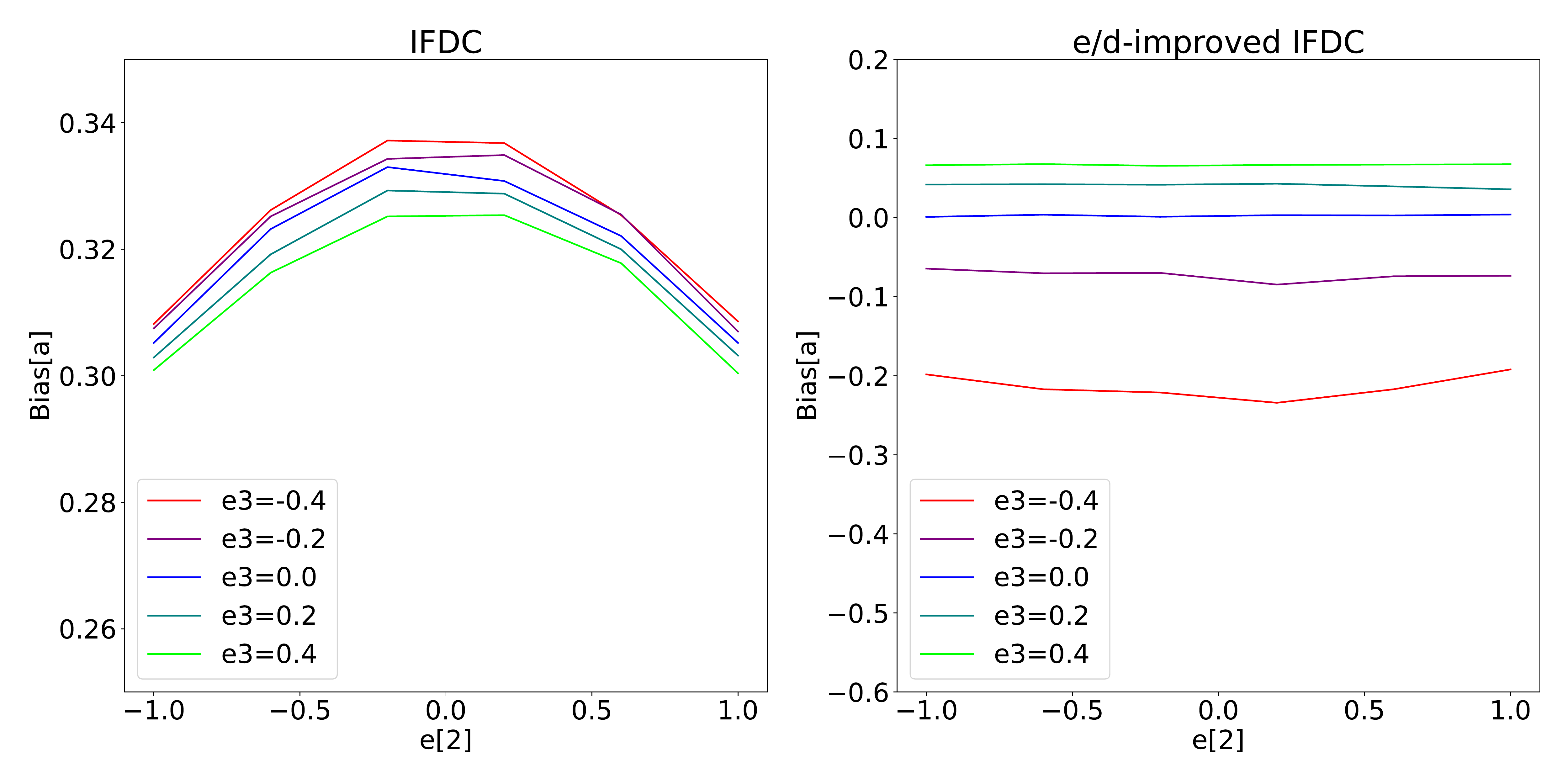}
	\caption{Experimental IFDC (left) and $\epsilon/d$-improved IFDC (right) biases for linear $d$ and cubic-polynomial $\epsilon$, plotted over $-1.0 < e_2 < 1.0$ and for $-0.4 < e_3 < 0.4$. Our estimator, the $\epsilon/d$-improved IFDC, has very low bias, but the IFDC has high bias (note that differing vertical scales have been employed to emphasize the trend).}
	\label{fig:scane23_nonlinear}
\end{figure}
For both nonlinear experiments, the $>0.35$ bias of the IFDC is drastically outperformed by the improved estimator with biases largely of magnitude $<0.05$. However, the IFDC enjoys significantly more stability against both quadratic and cubic nonlinearities, in fact appearing essentially insensitive to $d_2$ and $d_3$, compared with the improved estimator.

For the improved estimators, the dependence on acquired bias on the polynomial coefficients largely agrees with our theoretical analysis in Section \ref{sec:theory}. Comparing the right plot of Figure~\ref{fig:scand23_nonlinear} with Figure~\ref{fig:mathematica_d23}, we see confirmation both of the positive bias trend with $d_2$ and of the negative bias trend with $d_3$. There are, however, quantitative differences, where the perturbative approach overpredicts the bias by a factor of $5-10$, suggesting that a more evolved approach than Taylor expansion could be required to fully understand the consequences of nonlinearities in $d$.

Comparing the right plot of Figure~\ref{fig:scane23_nonlinear} with Figure~\ref{fig:mathematica_e23}, we again see confirmation both of the weak dependence of bias on $e_2$ and of the signed bias trend with $e_3$. Quantitatively, the match between theory and experiment is much stronger here, confirming the convergence of the $\epsilon$ polynomial expansion. For large, positive $e_3$, the numerical estimator begins to fail due to large variance, and more samples would be required to resolve this parameter region, but it is clear that beyond $e_3 \sim 0.4$ the improved estimator bias begins to surpass that of the original IFDC. In general we expect that higher-order nonlinearities would cause the estimator to fail more rapidly, although it is possible it might exceed expectations for specific nonlinear scenarios.

\subsection{International Stroke Trial semi-synthetic experiments}

To assess the performance of our estimators on more realistic data, we make use of the International Stroke Trial (IST) database \citep{Carolei1997}, a collection of stroke treatment and 14-day/6-month outcome data for $19,345$ individual patients. 

We take $W = AGE$ and $X = RSBP$, the systolic blood pressure at randomisation, both normalized to lie in $[0,1]$. We specify linear causal effects for $c,a,b,$ and $\epsilon$ and construct $M$ and $Y$ by propagation through the structural equations \eqref{struc_eqs} for each IST sample, including Gaussian random noise with variance $\sigma^2$. However, $d$ is not specified as it is manifest in the data with strength and linearity unknown. 

For simulation, a $20 \times 3$ grid over $\epsilon$ and $\sigma^2$ is specified, and at each point in parameter space, $200$ runs are generated using the same full set of $19,345$ IST samples, but with independently-sampled noises $u_M$ and $u_Y$. The bias results are shown in Figure~\ref{fig:scanenoise_ISTdata}, with the IFDC plotted with dashed lines and the $\epsilon/d$-improved IFDC with solid. Our improved estimator attains a generic improvement over the original IFDC for all $\epsilon \in [0,3]$ and $\sigma^2 \in [0.1,1]$, ranging between $20-40\%$ decrease in bias. This application is only a proof of concept, and these positive results indicate that further improvement could likely be achieved by more fully taking account of the non-Gaussianity of $X$ and $W$ and the nonlinearity of $d: X \rightarrow W$.
\begin{figure}[h!]
	\centering
	\includegraphics[scale=0.2]{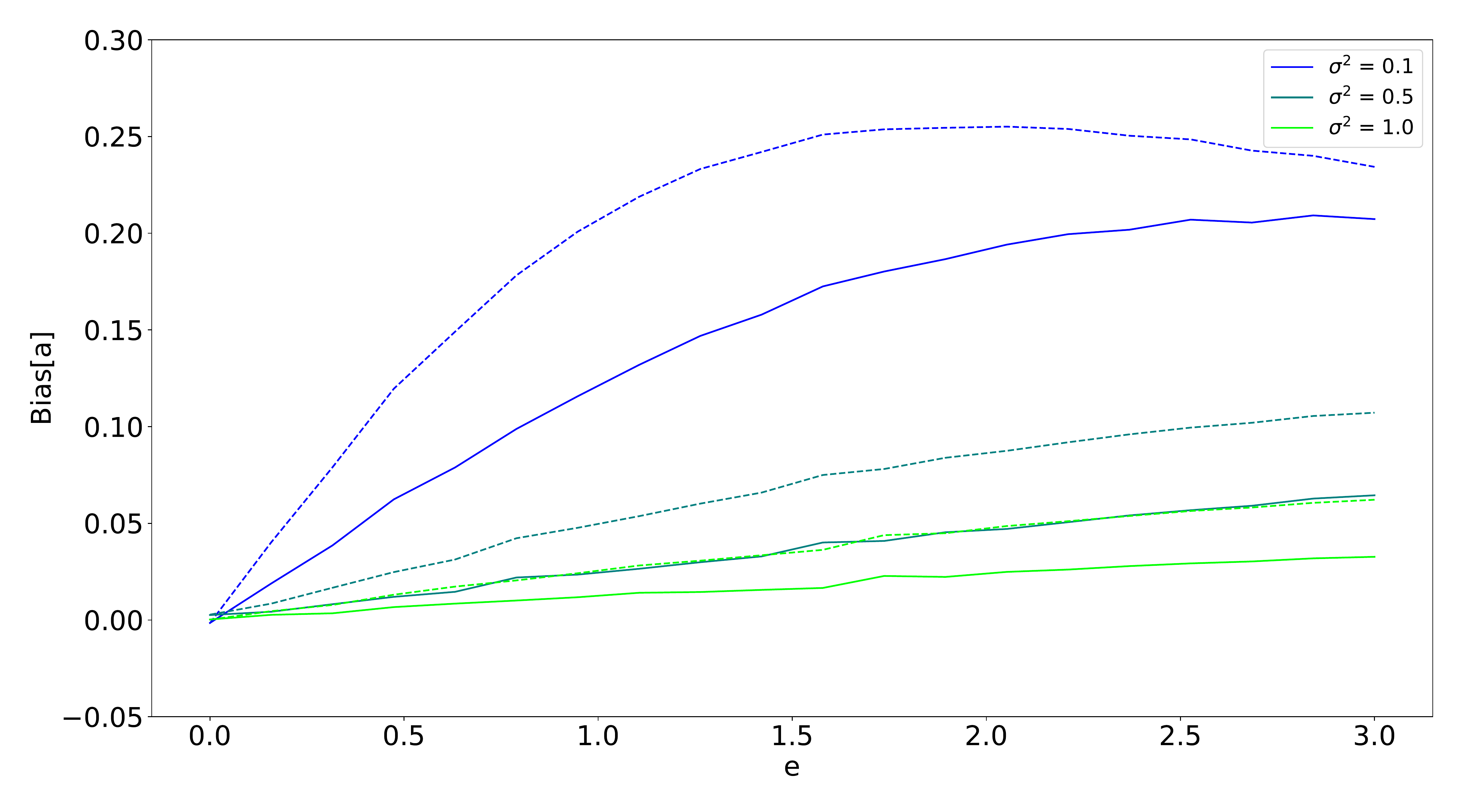}
	\caption{Experimental IFDC (dashed) and $\epsilon/d$-improved IFDC (solid) biases for synthetic IST data as described in the text, plotted over $0 < \epsilon < 3.0$ and with $\sigma^2 \in \{ 0.1, 0.5, 1.0\}$. In all cases, our estimator, the $\epsilon/d$-improved IFDC, has smaller bias than the baseline estimator.}
	\label{fig:scanenoise_ISTdata}
\end{figure}

\section{Conclusion}
\label{sec:conclusion}

In this paper, we studied estimation of long-term treatment effects when both experimental and observational data were available. Specifically, we addressed the case where unmeasured confounders are present in the observational data. Our long-term causal effect estimator was obtained by combining regression residuals with short-term experimental outcomes in a specific manner to create an instrumental variable, which was then used to quantify the long-term causal effect through instrumental variable regression. We initially worked in the linear structural causal model framework,  proved this estimator is unbiased, and studied its variance. We then extended this estimator to partially linear structural models and proved unbiasedness still holds under a mild assumption. Finally, we empirically tested our long-term causal effect estimator on synthetic data, as well as real data from the International Stroke Trial---demonstrating accurate estimation. Although long-term effect estimation was our primary focus, the estimator and methods described could be applied to any single-stage causal effect with a nonzero-mean confounding variable; we therefore encourage that our results be interpreted within the much broader context of front-door and IV estimation methods.


\acks{GVG was supported by Spotify and by the STFC UCL Centre for Doctoral Training in Data Intensive
Science (grant no. ST/P006736/1), and was funded by the UCL Graduate Research and Overseas Research Scholarships. This research began while GVG was an intern at Spotify. The authors thank Mounia Lalmas for supporting this project, and the anonymous reviewers for their valuable feedback.
}

\bibliography{clear2023.bib}

\appendix

\section{Covariance Algebra}
\label{sec:algebra}

In order to extend the derivations in \cite{gupta2021estimating} to cases with confounded mediators, multiple mediators, and pre-treatment covariates, it is necessary to introduce some new technology. Many key results including bias and variance for FDC-type estimators and covariance between estimators, all necessary to the estimation of the total causal effect, rely on essentially two steps. First, the desired expectation value is expanded using smoothing, also known as the law of total expectation or the tower rule:
\begin{equation}
    \EE[X] = \sum_x \sum_y x \cdot \mathbf{P}[X=x,Y=y] = \sum_y \left[ \sum_x x \cdot \mathbf{P}[X=x \mid Y=y] \right] \cdot \mathbf{P}[Y=y] = \EE[\EE[X \mid Y]],
\end{equation}
where $X$ and $Y$ are random variables (r.v.s) defined on the same probability space, and the expansion may be performed multiple times. In our application, $X$ is replaced by the desired expectation value, and a set of conditioners $\{Y\}$ are chosen so that the denominator (and as many numerator terms as possible) are fixed under $\{Y\}$. These fixed terms simplify by symmetry in some cases, and in more complex cases reduce to known distributions such as the Inverse-Wishart.

Second, the unfixed terms must be evaluated. Frequently these are of the form $\mathbf{E}[u,Y]$, where $u$ is some noise r.v. in the causal graph which is neither fixed by $Y$ nor independent from it. Linearity in a Gaussian-noise graphical model implies that any two node or noise r.v.s are bivariate normal, and indeed that any $N$ node or noise r.v.s are $N$-multivariate normal. This is hugely advantageous, because conditioning acts on a linear projection on a space of multivariate normal r.v.s.

For example, suppose $X$ and $Y$ have a bivariate normal distribution:
\begin{equation}
    (X,Y) \sim \mathcal{N}\left( \mathbf{\mu} = 
    \begin{pmatrix}
    \mu_X \\
    \mu_Y \\
    \end{pmatrix}, \mathbf{\Sigma} = 
    \begin{pmatrix}
        \sigma^2_X & \rho \sigma_X \sigma_Y\\
        \rho \sigma_X \sigma_Y & \sigma^2_Y\\
    \end{pmatrix} 
   \right),
\end{equation}
where $\rho$ is the correlation between $u$ and $Y$. Projection implies the following conditional expectations among $u$ and $Y$:
\begin{align}
    \EE[X \mid Y] &= \mu_X + \rho \frac{\sigma_X}{\sigma_Y}(Y-\mu_Y), \nonumber \\
    \EE[Y \mid X] &= \mu_Y + \rho \frac{\sigma_Y}{\sigma_X}(X-\mu_X), \nonumber \\
    \mathbb{V}[X \mid Y] &= \sigma^2_X (1-\rho^2), \nonumber \\
    \mathbb{V}[Y \mid X] &= \sigma^2_Y (1-\rho^2). \\
\end{align}
As a sanity check, we can see that both variances vanish if $\rho = 1$, and retain their independent values if $\rho = 0$. $\rho$ must be evaluated directly, which is straightforward in a linear Gaussian model; for instance, if $Y = \alpha X + U$ with $X \ind U$, $\text{cov}[X,Y] = \alpha \cdot \text{cov}[X,X] = \alpha \sigma^2_X$, implying $\rho = \frac{\text{cov}[X,Y]}{\sigma_X \sigma_Y} = \alpha \frac{\sigma_X}{\sigma_Y}$. This reproduces the well-known result that the conditional expectation of one of a set of summands on their sum is proportional to the ratio of their variances.

The above result is frequently sufficient, however it is too strict for our use case. We will need to be able to compute conditional moments of the form $\EE[ \prod_l u_i(l) \mid {Y_j}]$, where the product may include repeated or distinct noises, but the set $Y_j$ must be distinct (and sometimes may be reducible). To achieve this, we combine two tools: the general conditional projection for Gaussian families in terms of Schur complements, to easily handle a vector of conditioned r.v.s; and Isserlis' theorem for higher-order moments to handle arbitrarily complicated products of noises, so long as all r.v.s are zero-mean.

Following \cite{taboga2021}, the multivariate Gaussian conditional moments are: suppose vector-valued r.v. $X$ is $k$-multivariate normal with distribution
$X \sim \mathcal{N}(\mathbf{\mu}, \mathbf{\Sigma})$. Then for any partition $a+b = k$, where we define
\begin{equation}
X = \begin{pmatrix}
    X_a \\
    X_b \\
    \end{pmatrix}, 
\mathbf{\mu} = \begin{pmatrix}
    \mu_a \\
    \mu_b \\
    \end{pmatrix}, 
\mathbf{\Sigma} = \begin{pmatrix}
        \Sigma_a & \Sigma^T_{ab}\\
        \Sigma_{ab} & \Sigma_b\\
    \end{pmatrix},
\end{equation}
the vector-valued conditional mean is
\begin{equation}
\EE[X_a \mid X_b] = \mu_a + \Sigma^T_{ab} \Sigma^{-1}_b (X_b-\mu_b)
\end{equation}
and the matrix-valued conditional variance is
\begin{equation}
\mathbb{V}[X_a \mid X_b] = \Sigma_a - \Sigma^T_{ab} \Sigma^{-1}_b \Sigma_{ab}.
\end{equation}
Note that in the above conditional mean, only the bilinear survives if $\mu_a = \mu_b = 0$, as in our applications. Also, the term $\Sigma_a - \Sigma^T_{ab} \Sigma^{-1}_b \Sigma_{ab}$ is known as the Schur complement of block $\Sigma_b$ in $\mathbf{\Sigma}$. Without needing to rearrange the covariance matrix, $\mathbb{V}[X_b \mid X_a]$ can be found simply by taking the Schur complement of block $\Sigma_a$.

The complete partition above is excessive in most cases. If we only desire the expected mean for a single variable $X_i \in X_a$, for instance, the matrix equation becomes:
\begin{equation}
\EE[X_i \mid X_b] = [\mu_a]_i + [\Sigma^T_{ab}]_{ij} [\Sigma^{-1}_b]_{jk} [X_b-\mu_b]_k
\end{equation}
where $i,j,k$ are matrix indices and summations are assumed to be entire. What was a full $a \times a$ matrix multiplication is now a vector bilinear. Similarly, if we only desire a particular covariance $\text{cov}[X_i,X_j]$ for $X_i,X_j \in X_a$, the matrix equation becomes:
\begin{equation}
\text{cov}[X_i, X_j \mid X_b] = [\Sigma_a]_{ij} - [\Sigma^T_{ab}]_{im} [\Sigma^{-1}_b]_{mn} [\Sigma_{ab}]_{nj}
\end{equation}
where $i,j,m,n$ are matrix indices, and again we have arrived at a vector bilinear.

\section{Proofs of Estimator Biases and Variances}
\label{sec:proofs}

In this section we describe the construction of four residual-based instrumental estimators for $a$. Two of them will be shown to be unbiased except for some zero-measure choices of structural parameters, which will be characterised.

\subsection{Estimators and their Biases}

First let us recall the structural equations for the CM model,
\begin{equation}
    W_i = u_i^W, \hspace{15pt}
    X_i = dW_i + u_i^X, \hspace{15pt}
    M_i = cX_i + \epsilon W_i + u_i^M, \hspace{15pt}
    Y_i = aM_i + bW_i + u_i^Y,
\end{equation}
where all variables except the confounder $W$ and noises $u$ are taken to be observable.
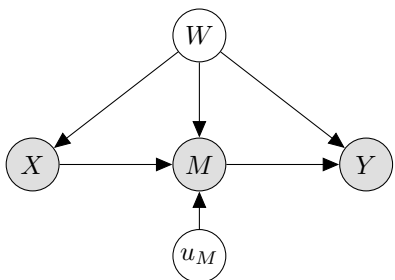
\begin{figure}[h!]
  \centering
  \begin{tikzpicture}

  \node[obs] (x)  {$X$};
  \node[obs, right=1.5cm of x] (m)  {$M$};
  \node[obs, right=1.5cm of m] (y)  {$Y$};
  \node[latent, above=1.0cm of m] (w)  {$W$};
  \node[latent, below=0.5cm of m] (um)  {$u_M$};

  \edge {w}     {x} ;
  \edge {w}     {y} ;
  \edge {x}     {m} ;
  \edge {um}     {m} ;
  \edge {w}     {m} ;
  \edge {m}     {y} ;

\end{tikzpicture}
  \caption{Causal graph with mediator confounded by latent $W$ and mediator noise term $u_M$.}
  \label{fig:causalgraph_uminstcm}
\end{figure}
The motivation for this approach is the observation that noise variable $u_M$, were it measurable, would be an acceptable instrument for $a$, as shown in Figure~\ref{fig:causalgraph_uminstcm}. The simplest approximation of $u_M$ is to regress $M$ on $X$ and take residual $R_c$ as an instrument. As discussed in the text, we could not expect $R_c$ to deliver an unbiased instrumental estimator for $a$, as the regression of $M$ on $X$ absorbs the backdoor path $X \leftarrow W \rightarrow M$. However it is instructive to compute the bias of the estimator by applying the law of total expectation conditioned on $X$ and $M$:
\begin{align}
    Bias[\hat{a}_{R_c}] &= \mathbb{E}[\mathbb{E}[\hat{a}_{R_c} - a|X,M]] \nonumber \\
    &= -\frac{b}{d} \mathbb{E}\left[ \mathbb{E}\left[ \left. \frac{M.(uX - X)  X.X - M.X X. (u_X - X)}{M.M X.X - (M.X)^2} \right| X, M \right] \right] \nonumber \\
    &= -\frac{b}{d} \mathbb{E}\left[  \frac{M.\mathbb{E}[u_X|X,M]  X.X - M.X X.\mathbb{E}[u_X|X,M]}{M.M X.X - (M.X)^2}  \right] \nonumber \\
    &= \frac{b \epsilon \sigma_{u_W}^2 \sigma_{u_X}^2}{ \epsilon^2 \sigma_{u_W}^2 \sigma_{u_X}^2 + \sigma_{u_M}^2 (\sigma_{u_X}^2 + d^2 \sigma_{u_W}^2)} \label{bias_ars_proof}
\end{align}
In the first line, the law of total expectation is applied, conditioned on $X$ and $M$ so as to isolate the numerator. In going from the first line to the second line, the independence of $u_Y$ from $X,M$ has been applied, and in the final expression the conditional expectation $\mathbb{E}[u_X|X,M]$ has been calculated as shown in Appendix \ref{sec:algebra}. Assuming homoscedasticity of the noise terms, this simplifies to
\begin{equation}
    Bias[\hat{a}_{R_c}] = \frac{b \epsilon}{1 + d^2 + \epsilon^2}.
\end{equation}
One notable property of this bias is that it vanishes in the limit of both small and large $\epsilon$, with global maximum bias of $\pm \frac{b}{\sqrt{4 + 2d^2}}$ at $\epsilon = \pm \frac{b}{2\sqrt{1 + d^2}}$ at $\epsilon = \pm \sqrt{1+d^2}$, as demonstrated in Figure~\ref{fig:scanenoise_linear}.
Not all estimators allow for this reduction strategy; in particular, the conditional expectations of the noise terms must combine just such that the denominator is cancelled and the expectation expression becomes that of a scalar. In such cases, we will proceed by estimating the numerator and denominator separately and treating the expectation of the ratio as well-approximated by the ratio of these expectations. For example, $\hat{a}_{R_c}$ has the following numerator and denominator expectations, derived by simple independence between noise terms and the fact that $\mathbb{E}[u_i.u_i] = \sigma_{u_i}^2$:
\begin{equation}
    \mathbb{E}[X.X M.Y - X.Y M.X] = \epsilon(b + a \epsilon) \sigma_{u_W}^2 \left(g^2 \sigma_{u_V}^2 + \sigma_{u_X}^2 \right) + a \sigma_{u_M}^2 \left(g^2 \sigma_{u_V}^2 + \sigma_{u_X}^2 + d^2 \sigma_{u_W}^2 \right);
\end{equation}
\begin{equation}
    \mathbb{E}[X.X M.M - (X.M)^2] = \epsilon^2 \sigma_{u_W}^2 \left( g^2 \sigma_{u_V}^2 + \sigma_{u_X}^2 \right) + a \sigma_{u_M}^2 \left(g^2 \sigma_{u_V}^2 + \sigma_{u_X}^2 + d^2 \sigma_{u_W}^2 \right).
\end{equation}
The ratio of these expectations, less $a$, delivers exactly the bias calculated in \eqref{bias_ars_proof}, which tells us that the numerator and denominator r.v.s are independent for this front-door estimator. In general this equivalence will fail due to correlations between the numerator and denominator, but we will assume the correlations to be weak as a useful first approximation.

We can now define and analyse the improved residuals which take advantage of prior-stage information about $c: X \rightarrow M$ and form the key results of this work. First, we take inspiration from the linear structural equations for the confounded mediator model, which suggest that the residual on $M$ after regression on $x$ should have the form $u_M - \frac{\epsilon}{d} u_X$. Taking the more general case of a prior instrument $g: V \rightarrow X$ in the CM model, we may arrive at this same linear structural quantity by the unique linear combination of residuals between $V$, $X$, and $M$ which removes unobserved data $W$, giving:
\begin{equation}
    R_V = Res[M,X] - \frac{\epsilon}{d} Res[X,V] \sim u_M - \frac{\epsilon}{d} u_X
\end{equation} 
where ratio $\frac{\epsilon}{d}$ is shown in Section \ref{sec:theory} to be unbiasedly estimable so long as confounder $W$ acquires some nonzero mean. Subsequently, we construct an instrumental estimator for a:
\begin{equation}
 \hat{a}_{R_V} = \frac{R_V.Y}{R_V.M} = \frac{V.V V.X (M.Y - \frac{\epsilon}{d} X.Y) + (V.X)^2 V.Y - \frac{\epsilon}{d}V.V V.M X.Y}{V.V V.X (M.M - \frac{\epsilon}{d} X.M) + (V.X)^2 V.M - \frac{\epsilon}{d}V.V V.M X.M}.
\end{equation}
Some simplification is achieved by applying the Law of Total Expectation conditioned over $V,X,M$, but the result is a nontrivial integral over these three heavily-correlated random vectors (in sample space):
\begin{align}
    Bias[\hat{a}_{R_V}] &= \mathbb{E} \Bigg[ \frac{b \sigma_{u_W}^2}{\sigma_{u_M}^2 \left( d^2 \sigma_{u_W}^2 + \sigma_{u_X}^2 \right) + \epsilon^2 \sigma_{u_W}^2 \sigma_{u_X}^2}  \nonumber \\
    &\left[ V.V V.X (M.M - \frac{\epsilon}{d} X.M) + (V.X)^2 V.M - \frac{\epsilon}{d}V.V V.M X.M \right]^{-1}  \nonumber \\
    & \bigg( d V.V M.X V.X \left( d \sigma_{u_M}^2 - c \epsilon \sigma_{u_X}^2 \right)  \nonumber \\
    &- \epsilon V.V V.X \left( X.X \left( d \sigma_{u_M}^2 - c \epsilon \sigma_{u_X}^2 \right) + \sigma_{u_X}^2 \left( d M.M - \epsilon X.M \right) \right)  \nonumber \\
    &- d V.V V.M \left( X.X \left( d \sigma_{u_M}^2 - c \epsilon \sigma_{u_X}^2 \right) + \epsilon X.M \sigma_{u_X}^2 \right)  \nonumber \\
    &+ \epsilon (V.X)^3 \left( d \sigma_{u_M}^2 - c \epsilon \sigma_{u_X}^2 \right) + \epsilon^2 V.M \sigma_{u_X}^2 (V.X)^2 \bigg) \Bigg] \label{bias_ared}
\end{align}
We do not yet know how to evaluate the above integral, except numerically. Instead, we can evaluate the expectations of the numerator and denominator:
\begin{equation}
    \mathbb{E} \left[ V.V V.X (M.Y - \frac{\epsilon}{d} X.Y) + (V.X)^2 V.Y - \frac{\epsilon}{d}V.V V.M X.Y \right] = a \left( \sigma_{u_M}^2 - \frac{c \epsilon (g^2 \sigma_{u_V}^2 + \sigma_{u_X}^2)}{d} \right);
\end{equation}
\begin{equation}
     \mathbb{E} \left[ V.V V.X (M.M - \frac{\epsilon}{d} X.M) + (V.X)^2 V.M - \frac{\epsilon}{d}V.V V.M X.M \right] = \sigma_{u_M}^2 - \frac{c \epsilon (g^2 \sigma_{u_V}^2 + \sigma_{u_X}^2)}{d}.
\end{equation}
In contrast to our results on $Bias[\hat{a}_{R_c}]$, the uncorrelated-ratio approximation suggests $Bias[\hat{a}_{R_V}] \simeq 0$. This only exactly holds if the integral in \eqref{bias_ared} evaluates to $0$, but is promising nonetheless. An intermediate possibility is that \eqref{bias_ared} approaches $0$ as $N_{samp} \rightarrow \infty$, but has a slow dependence on $N_{samp}$.

It is worth noting that $\hat{a}_{R_V}$ could have been constructed another way; naively from the structural equations, $Res[M,V] \sim \epsilon W + u_M$ just as $Res[M,X]$ does. We might even expect $Res[M,V]$ to experience less bias, since $V$ is not confounded by $W$. However, repeating the above analysis in the uncorrelated-ratio approximation gives a nonzero result,
\begin{equation}
    Bias[\hat{a}_{R_V}] \simeq \frac{bcd \sigma_{u_W}^2}{\sigma_{u_M}^2 + cd(cd+\epsilon) \sigma_{u_W}^2 + \frac{c(cd - \epsilon)}{d} \sigma_{u_X}^2},
\end{equation}
and so we have discarded this route.

It is straightforward to simplify estimator $\hat{a}_{R_V}$ and its corresponding residual to obtain the improved estimator $\hat{a}_{R_R}$ explored in-depth in the text. One simply sets $g=0$ to remove prior instrument $V$, and redefines the residual with $c$ presumed to be provided from an oracle:
\begin{equation}
    R_R = M - (c + \frac{\epsilon}{d}) X \sim u_M.
\end{equation} 
Importantly, this construction leaves the door open to joint estimation of $c$ and $\frac{\epsilon}{d}$ from the prior stage in the model, in the sense that only the sum is needed and biases of opposite sign could destructively interfere. The resultant instrumental estimator for $a$ is simple,
\begin{equation}
 \hat{a}_{R_R} = \frac{R_R.Y}{R_R.M} = \frac{M.Y - \left( c + \frac{\epsilon}{d} \right) X.Y}{M.M - \left( c + \frac{\epsilon}{d} \right) X.M}.
\end{equation}
Like $\hat{a}_{R_V}$, the full bias is not (yet) reducible beyond a high-dimensional integral,
\begin{align}
    Bias[\hat{a}_{R_R}] &= \mathbb{E} [ \frac{b \sigma_{u_W}^2}{\sigma_{u_M}^2 \left( d^2 \sigma_{u_W}^2 + \sigma_{u_X}^2 \right) + \epsilon^2 \sigma_{u_W}^2 \sigma_{u_X}^2}  \nonumber \\
    &\frac{\epsilon \sigma_{u_X}^2 M.M + (d \sigma_{u_M}^2 - \epsilon (2c + \frac{\epsilon}{d}) \sigma_{u_X}^2)X.M + (c + \frac{\epsilon}{d})(c \epsilon \sigma_{u_X}^2 - d \sigma_{u_M}^2)}{M.M - \left( c + \frac{\epsilon}{d} \right) X.M} ] \label{bias_arrem}
\end{align}
but, also like $\hat{a}_{R_V}$, this expectation appears unbiased in the uncorrelated-ratio approximation:
\begin{equation}
    \mathbb{E} \left[ M.Y - \left( c + \frac{\epsilon}{d} \right) X.Y \right] = a \left( \sigma_{u_M}^2 - \frac{c \epsilon \sigma_{u_X}^2}{d} \right);
\end{equation}
\begin{equation}
     \mathbb{E} \left[ M.M - \left( c + \frac{\epsilon}{d} \right) X.M \right] = \sigma_{u_M}^2 - \frac{c \epsilon \sigma_{u_X}^2}{d}.
\end{equation}
It is unsurprising that $R_R$ is no more biased than $R_V$, and we should expect that evaluation of the integrals in \eqref{bias_ared} and \eqref{bias_arrem} would show the same or better bias for $R_R$ even for finite sample size. In fact, numerical integration of \eqref{bias_arrem} indicates that any nonzero bias terms are proportional to $1/(N+k)$ for constants $k$, and therefore asymptotically vanish.

There is one crucial difference in the estimation performance of $\hat{a}_{R_R}$ vs. $\hat{a}_{R_V}$, a topological one arising from the presence of prior instrument $V$. As seen in the uncorrelated-ratio approximation, there are values of $\frac{\epsilon}{d}$ for which the numerator and denominator simultaneously approach $0$. Again assuming homoscedasticity of the noise terms for simplicity, this bias pole occurs at $\frac{\epsilon}{d} = \frac{1}{c}$ for $\hat{a}_{R_R}$ and at $\frac{\epsilon}{d} = \frac{1}{c \left( g^2 + 1 \right)}$ for $\hat{a}_{R_V}$. For finite sample sizes, one expects that each pole will be centered in a region of finite width where the estimator performs poorly, but that this bias will contract to a delta function as $N_{samp} \rightarrow \infty$. These poles are connected in the limit as $g \rightarrow 0$, although $\hat{a}_{R_V}$ is not defined at $g=0$. 

The practical consequence of the above analyses is that two instrumental estimators of $a$, constructed from the $\epsilon/d$-improved residual and from the remainder, are essentially unbiased. They each have a pole region of slowly-converging bias, however given sufficiently large $g$, these regions can be well-separated. In the presence of a prior instrument $g$, it is therefore possible to construct an unbiased estimator for $a$ throughout $(\epsilon,d)$ parameter space. It is for this reason that we illustrate both estimation strategies in full despite their obvious similarities.

\subsection{Variances}

We refer first to the variance computations in \cite{gupta2021estimating}, where finite-sample and asymptotic variances for $\hat{c}$ and $\hat{a}$ are calculated taking advantage of the asymptotic normality of OLS estimators, and the properties of inverse-Wishart-distributed matrices. For the front-door estimator, the asymptotic variances are quoted as follows:
\begin{equation}
    V_\infty (\hat{a}_{FDC}) = \frac{b^2 \noise{w} \noise{x} + \noise{y}(d^2 \noise{w} + \noise{x})} {(d^2 \noise{w} + \noise{x})\noise{m}},
\end{equation}
\begin{equation}
    V_\infty (\hat{c}) = \frac{\noise{m}} {d^2 \noise{w} + \noise{x}}.
\end{equation}
Via the Delta method, the asymptotic variance in estimating the total causal effect $ac$ is given by
\begin{equation}
    V_\infty (\hat{ac}) = c^2 V_\infty (\hat{a}) + a^2 V_\infty (\hat{c}),
\end{equation}
which holds as long as $Cov(\hat{a},\hat{c}) = 0$.

Following \cite{Corradi,Cameron}, the asymptotic variance for a scalar instrumental estimator $\hat{a}_{IV} = \frac{R.Y}{R.M}$ may be written
\begin{equation}
    V_\infty( \hat{a}_{R} ) = \frac{ \EE\left[ (R.R) \cdot \EE[\tilde{u}_Y.\tilde{u}_Y | R] \right] }{ Cov(R,M)^2 }
\end{equation}
where $\tilde{u}_Y$ denotes all additive contributions to $Y$ besides $a M$, and we have taken instrument $R$ to have zero mean. Following our claim that the instrumental estimator built from $R_c$ with no confounding on the mediator $(\epsilon = 0)$ is simply the FDC estimator, it is instructive to confirm that the IV asymptotic variance agrees with the FDC result from \cite{gupta2021estimating}.

For all causal structural models we consider, $\tilde{u}_Y = u_Y + b \cdot u_W$. In the $\epsilon = 0$ case, no confounding implies $R_c \ind \tilde{u}_Y$, so that $ \EE\left[ (R.R) \cdot \EE[\tilde{u}_Y.\tilde{u}_Y | R] \right] =  \EE\left[ R.R \right] \cdot \EE[\tilde{u}_Y.\tilde{u}_Y | R_c]$. Further computing $\EE[R_c.R_c] = \EE[R_c.M] = \noise{M}$, and evaluating $\EE[\tilde{u}_Y.\tilde{u}_Y | R_c]$ algebraically via the covariance matrix approach, we arrive at:
\begin{equation}
    V_\infty( \hat{a}_{R_c, \epsilon = 0} ) = \frac{ \noise{M} \cdot (\noise{Y} + b^2 \EE[u_W.u_W | R_c] ) }{ (\noise{M})^2 } = V_\infty (\hat{a}_{FDC}).
\end{equation}
When the mediator is permitted to experience some confounding $\epsilon$, we should expect some correlation between $R_c.R_c$ and $\tilde{u}_Y.\tilde{u}_Y$ via $u_W$. Separating this term from the product in the numerator, and observing that $\EE[R_c.R_c] = \EE[R_c.M] = \EE[\frac{M.M X.X - (M.X)^2}{X.X}]$, we find
\begin{equation}
    V_\infty( \hat{a}_{R_c} ) = \frac{ \noise{Y} + b^2 \EE[u_W.u_W | R_c] }{ \EE[\frac{M.M X.X - (M.X)^2}{X.X}] } + \frac{O(\sigma_{u_W}^4)}{\EE[\frac{M.M X.X - (M.X)^2}{X.X}]^2}
\end{equation}
where the quantity in the denominator has the distribution of the marginal from a Wishart-distributed matrix, as the quantity $\frac{1}{D}$ in \cite{gupta2021estimating}. It is possible to simplify this denominator expectation directly to only one non-trivial integral,
\begin{equation}
    \EE[R_c.R_c] = \noise{M} \cdot \frac{N}{N+2} + \epsilon^2 \noise{W} - \epsilon^2 \EE[\frac{\EE[(u_W.X)^2 | X]}{X.X}],
\end{equation}
where the final expectation value would reduce to $\noise{W} \cdot \frac{1}{N+2}$ were $u_W \ind X$, but numerical evaluation via cylindrical coordinates has confirmed that it approaches asymptotic $\noise{W}$ with strong correlation between $u_W$ and $X$. Thus $\EE[R_c.R_c]$ is bounded both above and below, with the overall $V_\infty( \hat{a}_{R_c} )$ slowly worsening as correlation between $u_W$ and $X$ becomes stronger.

If we assume that $c$ has been learned through previous experimentation, and that low-variance, unbiased estimation of $\frac{\epsilon}{d}$ has been attained, it is possible to obtain an exact variance result for the $\epsilon/d$-improved IFDC estimator. Since $R_c = u_M - \frac{\epsilon}{d} u_X$, $\EE[R_c.M] = \noise{M} - \frac{c \epsilon}{d} \noise{X}$ and $\EE[R_c.R_c] = \noise{M} + \frac{\epsilon^2}{d^2} \noise{X}$. Thus,
\begin{equation}
    V_\infty (\hat{a}_{R_R}) = \frac{b^2 \noise{W} \noise{X} + \noise{Y}(d^2 \noise{W} + \noise{X})} {(d^2 \noise{w} + \noise{X})} \cdot \frac{\noise{M} + \frac{\epsilon^2}{d^2} \noise{X}}{(\noise{M} - \frac{c \epsilon}{d} \noise{X})^2},
\end{equation}
which has the expected property that as $\epsilon \rightarrow 0$, $V_\infty (\hat{a}_{R_R}) \rightarrow V_\infty (\hat{a}_{FDC})$, but with asymptotic confounding $\epsilon \rightarrow \infty$, $V_\infty (\hat{a}_{R_R}) \rightarrow V_\infty (\hat{a}_{FDC}) \cdot \frac{\noise{M}}{c^2 \noise{X}}$. The variance expression only becomes unbounded at the pole $\frac{\epsilon}{d} = \frac{1}{c}$, just as expected from our computation of the bias.

\section{Nonlinear Bias Examples}
\label{sec:nonlinear}

As two practical examples, we demonstrate the computed $\epsilon/d$-improved IFDC biases for cubic-polynomial $d$ and linear $\epsilon$, and for linear $d$ and cubic-polynomial $\epsilon$. Specifically,
\begin{equation}
    d(W) = d_1 W + d_2 W^2 + d_3 W^3,
\end{equation}
in which case the invertibility condition simplifies to $-\sqrt{3 d_1 d_3} \leq d_2 \leq \sqrt{3 d_1 d_3}$, which may only be fulfilled if $d_1$ and $d_3$ have the same (or 0) sign. Setting all variances and $b=c=d_1=\epsilon_1=1$ for simplicity, we find
\begin{align}
    Bias\left[ \hat{a}_{R_R,n_d = 3} \right] &= \frac{6(2d_2^2 - d_3)(1 + 3d_3)}{1 + 72d_2^4 - 30d_3 - 108d_3^2 - 180d_3^3 + 18 d_2^2 (3 + 10d_3 + 20d_3^2)}, \\
    Bias\left[ \hat{a}_{R_R,n_\epsilon = 3} \right] &= \frac{3\epsilon_3}{\epsilon_2^2 + 12\epsilon_3 + 9\epsilon_3^2}.
\end{align}
Varying the cubic coefficient and plotting curves over the quadratic coefficient, theoretical bias estimates for these two scenarios are presented in Figures \ref{fig:mathematica_d23} and \ref{fig:mathematica_e23}, respectively. We have set all noise variances to $\sigma^2 = 0.2$ for these computations, in order to more clearly show trends and to assure convergence. For Figure \ref{fig:mathematica_d23}, we have taken terms of $d^{-1}$ up to order $m=10$ to demonstrate that at this order in the expansion, the prediction still varies substantially; it is ``non-perturbative'', and so even to order 10 should only be taken as a qualitative estimate. The convergence up to order 3 in Figure \ref{fig:mathematica_e23}, however, is taken to be sufficiently precise.
\begin{figure}[h!]
	\centering
	\includegraphics[width=0.9\textwidth]{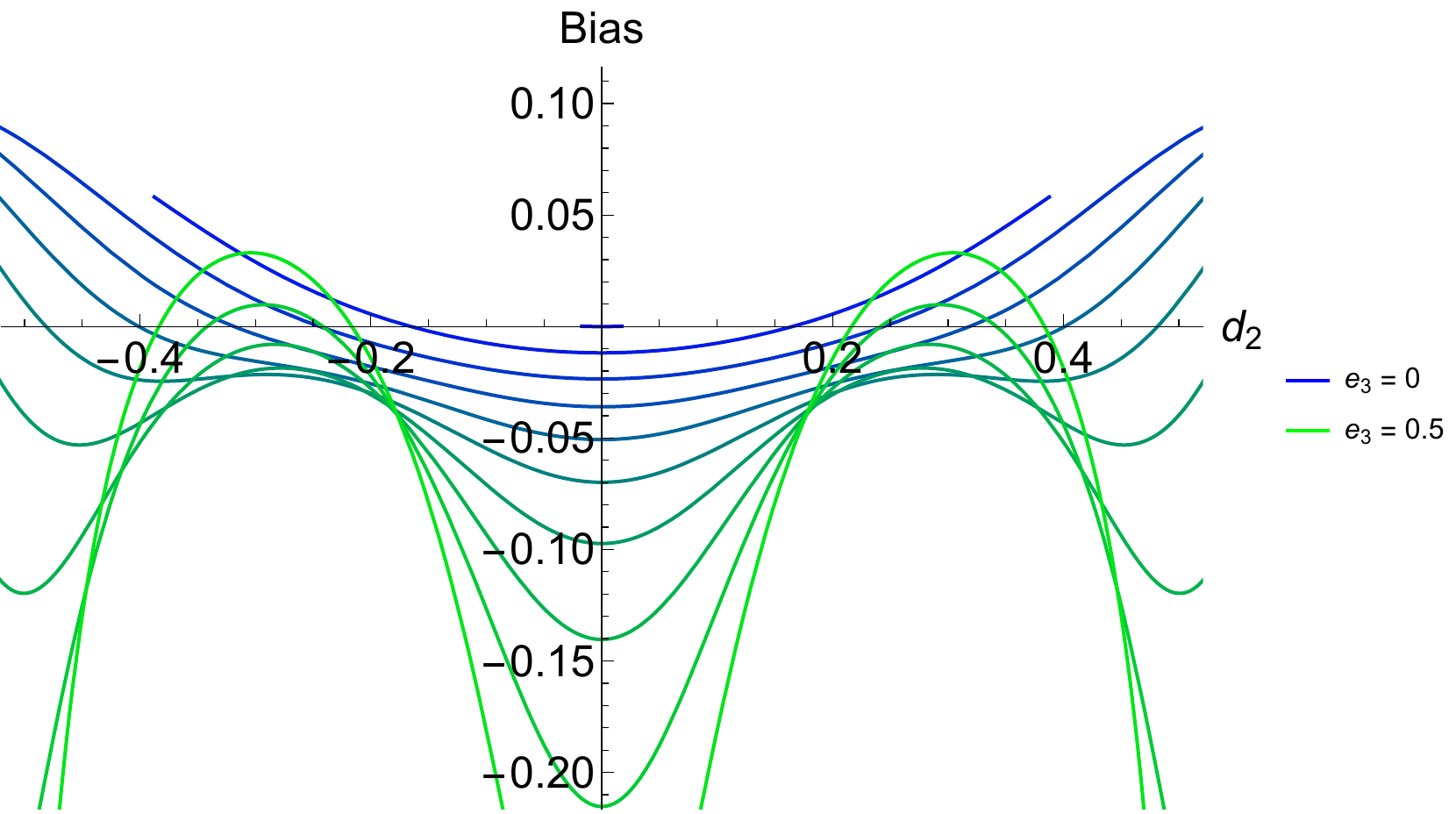}
	\caption{Theoretical $\epsilon/d$-improved IFDC biases for cubic-polynomial $d$ and linear $\epsilon$, plotted over $-0.5 < d_2 < 0.5$ and with curves ranging over $0 < d_3 < 0.5$.}
	\label{fig:mathematica_d23}
\end{figure}
\begin{figure}[h!]
	\centering
	\includegraphics[width=0.9\textwidth]{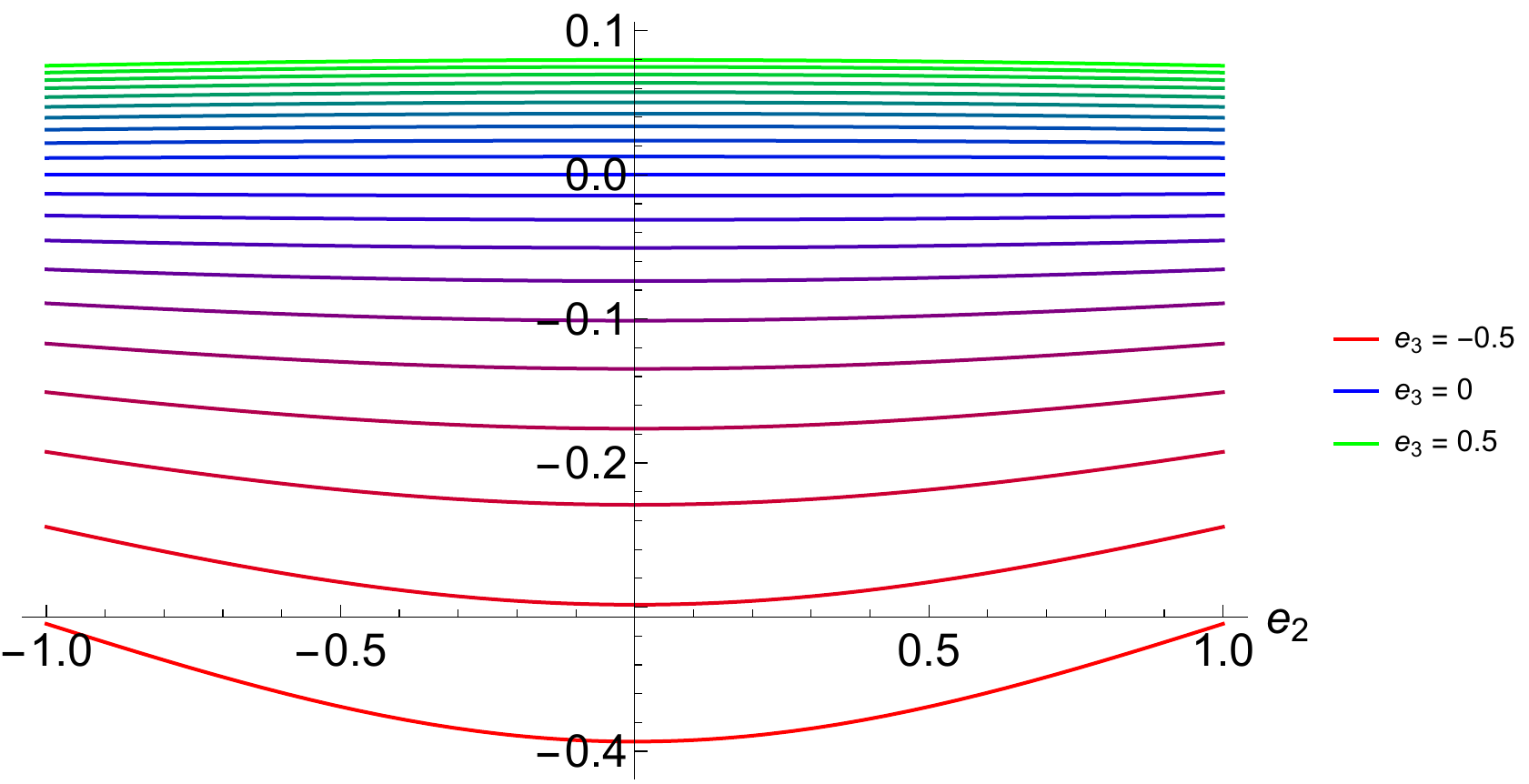}
	\caption{Theoretical $\epsilon/d$-improved IFDC biases for linear $d$ and cubic-polynomial $\epsilon$, plotted over $-1.0 < \epsilon_2 < 1.0$ and with curves ranging over $-0.5 < \epsilon_3 < 0.5$.}
	\label{fig:mathematica_e23}
\end{figure}
To summarise these results, up to non-perturbative effects we expect that nonzero $d_2$ pushes the bias in the positive direction, while nonzero $d_3$ (restricted to be positive by invertibility) pushes the bias in the negative direction. Coordinates in $(d_2,d_3)$ where unbiasedness is retained or nearly retained should therefore exist. In contrast, nonzero $\epsilon_2$ appears to have a much smaller impact on bias, in fact tending towards $0$, while nonzero $e_3$ leads to bias in the direction of $sign(\epsilon_3)$. It is noteworthy that for positive $\epsilon_3$, the bias is small and tentatively approaches an asymptote around $0.1$, while for negative $\epsilon_3$, bias grows rapidly and appears unbounded.

\end{document}